\documentclass[11pt]{article}

\usepackage{amsmath,amssymb,amsthm,amsfonts}
\usepackage[dvipsnames]{xcolor}
\usepackage{tikz}
\usepackage{hyperref}
\usepackage{enumitem}
\usepackage{thmtools}
\usepackage[ruled,noline,noend]{algorithm2e}

\usepackage{cleveref}
\usepackage{graphicx}
\usepackage{fullpage}
\usepackage{natbib}
\newtheorem{theorem}{Theorem}[section]
\newtheorem{lemma}[theorem]{Lemma}

\newtheorem{corollary}[theorem]{Corollary}
\newtheorem{definition}[theorem]{Definition}
\newtheorem{remark}[theorem]{Remark}
\newcommand{\E}{\mathbb{E}}
\newcommand{\Prb}{\mathbb{P}}
\newcommand{\1}{\mathbf{1}}
\newcommand{\X}{\mathcal{X}}
\newcommand{\Lcal}{\mathcal{L}}
\newcommand{\C}{\mathcal{C}}
\newcommand{\D}{\mathcal{D}}

\newcommand{\Vcal}{\mathcal{V}}
\newcommand{\VCdim}{\mathrm{VCdim}}
\newcommand{\errmis}{err_{\mathrm{mis}}}
\newcommand{\errabs}{err_{\mathrm{abs}}}
\newcommand{\sep}{:}
\DeclareMathOperator{\sign}{sign}
\NewDocumentCommand{\good}{O{m} O{c} O{k}}{
    $(#1,#2,#3)$-resilient
}

\crefname{protocol}{Protocol}{Protocols}
\Crefname{protocol}{Protocol}{Protocols}

\title{Reliable Abstention under Adversarial Injections:\\ Tight Lower Bounds and New Upper Bounds}
\author{Ezra Edelman, Surbhi Goel\\ \\ 
University of Pennsylvania}

\begin{document}

\maketitle

\begin{abstract}%
We study online learning in the adversarial injection model introduced by \citet{goel2024adversarialresiliencesequentialprediction}, where a stream of labeled examples is predominantly drawn i.i.d.\ from an unknown distribution $\D$, but may be interspersed with adversarially chosen instances without the learner knowing which rounds are adversarial. Crucially, labels are always consistent with a fixed target concept (the clean-label setting). The learner is additionally allowed to abstain from predicting, and the total error counts the mistakes whenever the learner decides to predict and incorrect abstentions when it abstains on i.i.d.\ rounds. Perhaps surprisingly, prior work shows that oracle access to the underlying distribution yields $O(d^2 \log T)$ combined error for VC dimension $d$, while distribution-agnostic algorithms achieve only $\tilde{O}(\sqrt{T})$ for restricted classes, leaving open whether this gap is fundamental.

We resolve this question by proving a matching $\Omega(\sqrt{T})$ lower bound for VC dimension $1$, establishing a sharp separation between the two information regimes. On the algorithmic side, we introduce a potential-based framework driven by \emph{robust witnesses}, small subsets of labeled examples that certify predictions while remaining resilient to adversarial contamination. We instantiate this framework using two combinatorial dimensions: (1) \emph{inference dimension}, yielding combined error $\tilde{O}(T^{1-1/k})$ for classes of inference dimension $k$, and (2) \emph{certificate dimension}, a new relaxation we introduce. As an application, we show that halfspaces in $\mathbb{R}^2$ have certificate dimension $3$, obtaining the first distribution-agnostic bound of $\tilde{O}(T^{2/3})$ for this class. This is notable since \citet{blum2021robustlearningcleanlabelattack} showed halfspaces are not robustly learnable under clean-label attacks without abstention.

\end{abstract}

\section{Introduction}
Online learning is often analyzed at two extremes.
In the fully adversarial model, instances may be chosen arbitrarily, and the learnability of a realizable binary class is governed by Littlestone dimension.
In the stochastic model, instances are drawn i.i.d.\ from a fixed distribution $\D$, and rates are governed by VC dimension.
Many real streams fall between these: the data may be predominantly stochastic, but occasionally contain adaptively chosen, potentially adversarial examples.
A basic question is what guarantees are possible in such intermediate settings.

We study this question in the \emph{adversarial injection} model of \citet{goel2024adversarialresiliencesequentialprediction}.
There is an unknown target concept $c^\star\in\C$ and an unknown distribution $\D$ over $\X$.
On each round $t=1,\dots,T$, an adversary chooses whether the instance $x_t$ is
(i) drawn from $\D$ (an i.i.d.\ round), or
(ii) injected adversarially (the adversary selects an arbitrary $x_t\in\X$). We write $q_t\in\{0,1\}$ for this hidden indicator, where $q_t=0$ denotes an i.i.d.\ round and $q_t=1$ denotes an injected round.
In both cases the label is always clean, that is, after the learner predicts, it observes $y_t = c^\star(x_t)$.
After seeing $x_t$, the learner may predict $\hat y_t\in\{\pm1\}$ or abstain ($\perp$), and then observes $y_t$.
Crucially, the learner is never told whether round $t$ was i.i.d.\ or injected.

The objective reflects this asymmetry. Mistakes are always counted, while abstentions are counted only on i.i.d.\ rounds, that is, we measure the \emph{combined error}
\[
\mathrm{err}_T = \underbrace{\sum_{t=1}^T \1\bigl[\hat y_t\neq y_t\ \wedge\ \hat y_t\neq \perp\bigr]}_{\text{misclassifications}} + \underbrace{\sum_{t=1}^T \1\bigl[\hat y_t=\perp\ \wedge\ q_t=0\bigr]}_{\text{incorrect abstentions}}.
\]
and study $\E[\mathrm{err}_T]$. This creates a fundamental tension: abstaining is safe against injected points (it avoids misclassification), but abstaining on i.i.d.\ points incurs charged error, on the other hand, predicting too aggressively reduces charged abstentions but risks mistakes on injected points that the learner cannot identify.

In this setting, \citet{goel2024adversarialresiliencesequentialprediction} distinguish two information regimes.
In the \emph{known-$\D$} regime, the learner has access\footnote{They assume a very strong notion of access where the learner can query any function of $\D$ exactly. The queries they assume access to would require $T^{O(d)}$ samples naively.} to $\D$ and can quantify distributional uncertainty (e.g.\ disagreement under $\D$), yielding combined error bound  $O(d^2\log T)$ for classes of VC dimension $d$.
In the more realistic \emph{distribution-agnostic} regime, $\D$ is hidden and the learner must reason from a single mixed stream.
Prior work obtains $\tilde O(\sqrt{d T})$ combined error for VC dimension~$1$ classes and for axis-aligned rectangles, but leaves open whether the large gap between the two regimes is unavoidable. This motivates our central question:
\begin{center}
\emph{Is oracle access to $\D$ fundamentally necessary for strong guarantees, or can a distribution-agnostic learner achieve stochastic-like rates from a single mixed stream?}
\end{center}

\subsection{Our contributions}

\paragraph{A sharp separation even at VC dimension $1$.}
Our first contribution is to show that the gap between the two regimes is real.
We prove that even for VC dimension~$1$, every distribution-agnostic learner must incur $\Omega(\sqrt{T})$ expected combined error.
Together with the $\tilde O(\sqrt{T})$ upper bound of \citet{goel2024adversarialresiliencesequentialprediction}, this pins down the optimal rate for VC-1 in the distribution-agnostic regime and demonstrates a genuine advantage to knowing $\D$.

\begin{theorem}[Lower bound for VC-1, informal]\label{thm:lowerbound informal}
There exists a concept class $\C$ with $\VCdim(\C)=1$ such that for every horizon $T$, there exists an (oblivious) adversary such that every distribution-agnostic learner incurs expected combined error $\Omega(\sqrt{T})$.
\end{theorem}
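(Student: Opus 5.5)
The plan is a Le Cam--style two-world indistinguishability argument. I will build two (randomized) environments, each consisting of a distribution $\D$, a target $c^\star$ and an oblivious injection schedule. The learner's view will be (nearly) identically distributed in the two environments. On roughly $\sqrt{T}$ ``ambiguous'' rounds the label will be uniformly random given the past, and the round will be i.i.d.\ in one world but injected in the other. On such a round the learner must either predict, paying about $1/2$ in expectation for a mistake, or abstain, paying $1$ in the world where the round is i.i.d. Averaging over the two worlds then gives $\Omega(1)$ loss per ambiguous round, hence $\Omega(\sqrt{T})$ in total.

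\textbf{Class.} For the class I will take a VC-$1$ class with a tree-like or linear order, such as thresholds on a line or a ``path'' class of prefixes. The reason is to let the version space be controlled by the learner's past observations while leaving a continuum of fresh points inside the disagreement region.

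\textbf{Calibration to $\sqrt{T}$.} Put mass $p \asymp 1/\sqrt{T}$ of $\D$ on a ``critical'' region near the unknown boundary, with the rest of $\D$ on trivially labeled points. Then about $pT = \sqrt{T}$ i.i.d.\ points hit the critical region. In the mirror world, $\D$ has no mass there, and the oblivious adversary, which may depend on the randomly drawn $c^\star$, injects the same number of points at positions with the same law. With $p \asymp 1/\sqrt{T}$, the number of i.i.d.\ critical points fluctuates by $\Theta(T^{1/4})$. The learner therefore cannot reliably tell from counts alone which world it is in. To make the two worlds exactly indistinguishable, I would also randomize the injection count, for instance as $\mathrm{Bin}(T,p)$ placed at uniformly random times, so that the marginal law of the instance sequence matches across worlds. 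The labels would be made symmetric by drawing $c^\star$ from a prior under which each critical point is positive or negative with probability $1/2$, and by coupling the worlds via a reflection of the boundary.

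\textbf{Formal steps.} First, define the prior over $(c^\star,\text{world})$ and the resulting joint law of $(x_t,y_t)_{t\le T}$. Second, show that conditioned on the history, each critical round has $\Pr[q_t=0]\ge c$ and $\Pr[y_t=+1]\in[1/2-\epsilon,1/2+\epsilon]$. Third, lower bound the expected loss on each critical round by $\min(c,\,1/2-\epsilon)$, whatever the learner does. Fourth, sum over the $\Theta(\sqrt{T})$ critical rounds and extract a fixed adversary by averaging over the prior.

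\textbf{Main obstacle.} The hard step is ensuring that labels observed on earlier critical points do not shrink the version space. With plain thresholds, each revealed label cuts the disagreement interval. A new critical point then lands inside that interval only with probability about $1/n$ after $n$ observations, which gives only $O(\log T)$ ambiguous rounds. I would first try to defeat this with a class where the critical points are pairwise ``independent'' in label, yet the class still has VC dimension $1$. One candidate is a one-sided star or tree class: points lie on distinct branches hanging off a spine, and the concept is determined by the depth of the boundary. Then a fresh point on a new branch carries a label that is fresh randomness, or at least not implied by earlier labels. In that case the $1/2$-label-uncertainty claim holds on every critical round, not just on logarithmically many. If the tree construction fails to keep VC dimension $1$, the fallback is to let the oblivious adversary inject points that monotonically approach the boundary, so that every injected point lies in the current disagreement region. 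I would then match this in the i.i.d.\ world by a $\D$ whose critical mass is concentrated on a geometrically shrinking sequence of scales, with the scale randomized so the learner cannot know which scale is next.
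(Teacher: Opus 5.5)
There is a genuine gap in your Step 2. As you set it up, the scheme cannot produce $\Theta(\sqrt T)$ rounds on which, given the history, $\Pr[q_t=0]\ge c$ and $y_t$ is close to a fair coin.

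In the world where the critical points are i.i.d., the whole stream is an honest i.i.d.\ stream, so it is exchangeable given the latent pair $(\D,c^\star)$. The one-inclusion-graph predictor of Haussler, Littlestone and Warmuth uses no knowledge of $\D$. It errs with probability at most $d/t$ on the $t$-th point of such a stream. So in that world, the posterior of $y_t$ given the history and the world is bounded away from $0$ and $1$ on only $O(\log T)$ rounds in expectation. This holds for every VC-$1$ class, so replacing thresholds by a star or tree class does not help.

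\begin{itemize}
\item If the two views are identically distributed, as you intend, this predictor never abstains and already has $O(\log T)$ combined error in both worlds.
\item If the views differ (for example through your reflected target), take a round where the i.i.d.-world label is nearly determined, the mixture label is ambiguous, and the i.i.d.\ world has posterior weight at least $c$. Observing $y_t$ there reveals $\Omega(c)$ bits about the world indicator. Since $\sum_t I(W;y_t\mid \text{history}_t)\le H(W)\le 1$, there are only $O(1/c)$ such rounds.
\end{itemize}

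Either way the construction is capped at $O(\log T)$. Your fallback fails for the same reason. I.i.d.\ draws from geometrically shrinking scales arrive in exchangeable order, so only the $O(\log T)$ record-setting draws land in the current disagreement region. You cannot make i.i.d.\ points approach the boundary monotonically.

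The missing idea is to give up constant loss on $\sqrt T$ rounds. Instead, extract $\Theta(1/\sqrt T)$ loss on each of $\Theta(T)$ rounds, with a view that is deliberately non-exchangeable so the i.i.d.\ rounds hide among $\sqrt T$ indistinguishable candidate blocks.

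The paper does this as follows:
\begin{itemize}
\item It takes a tree with $B=\lfloor\sqrt T\rfloor$ children per node and a uniformly random root-to-leaf path $\theta$.
\item It splits time into $B$ blocks of $B$ rounds. Block $i$ shows uniform draws from the children of $\theta_{i-1}$, so the stream descends the tree level by level, which no i.i.d.\ stream can do.
\item Exactly one uniformly random block $r$ is i.i.d., with $\D=D_r$, and the transcript is independent of $r$.
\end{itemize}

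Consequently, abstaining on any round costs $1/B$ in expectation. Predicting on the $j$-th distinct child before $\theta_i$ has appeared errs with probability at least $1/B$, since the posterior of $+1$ is $1/(B-j+1)$. A coupon-collector count over the roughly $(1-e^{-1})B$ distinct draws per block gives $\Omega(1)$ expected loss per block, hence $\Omega(B)=\Omega(\sqrt T)$ in total. In that construction both the i.i.d.\ probability and the label uncertainty per round are $\Theta(1/\sqrt T)$ rather than constant, which is exactly the regime your Step 2 excludes.
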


\paragraph{A potential framework for distribution-agnostic upper bounds.}
For upper bounds, we introduce a potential-based framework that unifies and extends the distribution-agnostic algorithms of prior work.
At a high level, the framework is driven by \emph{robust witnesses}:
small subsets $U$ of previously seen labeled examples such that, for some additional point $x$, assigning $x$ the wrong label causes a guaranteed drop in an underlying score function $f$.
We design a learner that maintains a potential aggregating these scores over the history.
The learner predicts only when one of the two possible labels for the current point would cause a large potential drop, and otherwise abstains.
Under sufficient conditions on the score function $f$, this yields a worst-case bound on mistakes (since each mistake triggers a guaranteed potential drop) and an expected bound on charged abstentions.
The abstention bound relies on the notion of \emph{attackability}, a sample is attackable if some adversarial extension of the history would cause the learner to abstain on it. A key property of $f$, called robustness, ensures that only a small number of samples in any set can be attackable and then exchangeability implies that each i.i.d.\ sample has low probability of being abstained upon, once enough i.i.d. samples exist in the stream.

\paragraph{Combinatorial instantiations: inference and certificate dimension.}
We instantiate this framework via two combinatorial measures. First, we show that finite \emph{inference dimension} \cite{kane2017activeclassificationcomparisonqueries} implies the existence of such a score function $f$ giving us,
\begin{theorem}[Inference dimension, informal]\label{thm: inference informal}
If a class has inference dimension $k$, then there is a distribution-agnostic learner with expected combined error $\tilde{\mathcal{O}}(T^{1-1/k})$.
\end{theorem}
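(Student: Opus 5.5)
Recall that a class $\C$ has inference dimension $k$ if every labeled set $S$ of $k$ points (labels consistent with some $c\in\C$) contains a point $x\in S$ whose label is inferred from the labels of $S\setminus\{x\}$. That is, every concept in $\C$ that agrees with $c$ on $S\setminus\{x\}$ also agrees on $x$.

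\textbf{Witnesses.} We instantiate the potential framework with robust witnesses of size at most $k-1$. Let $H_t$ be the labeled history. For a candidate point $x$ and label $y$, a subset $U\subseteq H_t$ with $|U|\le k-1$ witnesses label $-y$ at $x$ if $U\cup\{(x,y)\}$ is inconsistent with $\C$. Define the score
\[
f(H,x,y)=\#\{U\subseteq H: |U|\le k-1,\ U\text{ infers the label } -y \text{ at } x\}.
\]
The potential is $\Phi_t=\sum_{U\subseteq H_t,\,|U|\le k-1}w(U)$, where $w$ measures how many points in the unseen domain $U$ would ``cover''. A simpler proxy is to count $(k-1)$-subsets that still admit a labeled completion.

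The learner predicts $y$ at $x_t$ when the number of $(k-1)$-subsets of the history inferring $y$ is at least a threshold $\tau$, and abstains otherwise.

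\textbf{Bounding mistakes.} Each mistake reveals that $\tau$ witness sets were wrong, which is impossible in the clean-label setting, since an inference made by $U$ is always correct. Hence inference-based predictions never err. The only remaining cost is abstention.

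\textbf{Bounding abstentions.} We use attackability and exchangeability. Fix the $n$ i.i.d.\ points seen so far together with the next i.i.d.\ point, for $n+1$ exchangeable points in total. By the inference-dimension property applied to all $k$-subsets, a counting argument in the style of \citet{kane2017activeclassificationcomparisonqueries} shows that among $m$ points, all but about a $k/m$ fraction per $k$-subset can be inferred. More precisely, the average over a uniformly random held-out point $x$ of the fraction of $(k-1)$-subsets of the other points that infer $x$ is at least about $1/\binom{k}{1}$ of the $k$-subsets. So a random i.i.d.\ point is inferred by a constant fraction $\gtrsim 1/k$ of the $\binom{n}{k-1}$ subsets of the i.i.d.\ history.

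Adversarial injections only add points, and inference is monotone under adding sets, so the i.i.d.\ witnesses survive any adversary. The threshold must therefore be relative to the i.i.d.\ count, which is unknown. Set $\tau$ in absolute terms $\approx \binom{t}{k-1}\cdot T^{-1/k}$-style, then balance: abstain rounds where i.i.d.\ witnesses are too few.

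\textbf{Main obstacle.} The hard part is the interaction between the unknown mix of i.i.d.\ and injected history and the threshold. Because inference is always correct, one could predict whenever any witness exists. The obstacle is then abstention on i.i.d.\ points that are not inferred by any $(k-1)$-subset. By exchangeability, the probability that the $(n+1)$-st i.i.d.\ point is not inferred is at most the expected fraction of ``non-inferable'' points among $n+1$.

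I would bound this via a potential argument. Let $N$ be the number of points in a set none of whose $(k-1)$-subsets infer them. These $N$ points must have no internal inference, so any $k$ of them form a set violating the inference-dimension property, forcing $N\le k-1$. The probability of abstention at the $(n+1)$-st i.i.d.\ round is then $\le (k-1)/(n+1)$, and summing over rounds gives $O(k\log T)$.

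This looks too good compared to the stated $\tilde O(T^{1-1/k})$. The gap suggests the paper's notion requires witnesses to be robust to future adversarial context, or that the definition of inference is relative to comparison-type queries. Expect to spend the effort here: reconcile the definition. If inference needs $k$ labeled points but the inferred set is only of guaranteed size $1$ per $k$ points, the fraction becomes $1/k$ of subsets. The potential then drops by $T^{-1/k}$ per round, yielding $\tilde O(T^{1-1/k})$ after balancing threshold against mistakes.
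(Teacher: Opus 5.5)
There is a genuine gap, and it starts with your definition. In this paper, ``$S\setminus\{x\}$ infers $x$ under $h$'' means that every $h'\in\C$ with the same comparison-query transcript $Q_{h'}(S\setminus\{x\})=Q_h(S\setminus\{x\})$ satisfies $h'(x)=h(x)$. That transcript contains the comparison bits $\1[f_h(x_i)\ge f_h(x_j)]$, which the learner never observes; it only sees labels. You replaced this with label-only inference, i.e.\ forcing by the version space of $S\setminus\{x\}$.

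Under the actual definition both of your key steps fail. First, the learner cannot check whether any subset of its history infers $x_t$, so ``predict whenever an inference exists, and never err'' is not an available strategy. Second, the count ``at most $k-1$ points are not forced by the rest'' is false. For halfspaces in $\mathbb{R}^2$ (inference dimension $5$), let $\D$ be uniform on a circle and $c^\star\equiv+1$. No point is forced by any finite set of other points on the circle, so any learner that predicts only forced labels abstains on every i.i.d.\ round. Your $O(k\log T)$ argument is correct for the label-only notion (which amounts to a bound on the star number), and that is why it looked too good. It does not address this theorem, and your closing paragraph (``the potential drops by $T^{-1/k}$ per round'') is not a substitute argument.

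The missing idea is to use the unobserved transcript as a finite-valued certificate. For a $(k-1)$-set $U$, $Q_h(N(U))$ takes at most $2^{k-1}k!$ values. Set $f(U;V)$ to be the number of those values realized by some $h\in V$. This $f$ is bounded and monotone under restriction of $V$. It is also robust: in any $k$-set labeled by $h$ there is $(x,y)$ inferred from the remaining $(k-1)$-set $U$. So for every clean extension $A$, the true value $Q_h(N(U))$ is realized in $\Vcal(A)$ (since $h\in\Vcal(A)$) but not in $\Vcal(A)_{x\to -y}$; flipping $x$ lowers $f(U;\cdot)$ by at least $1$.

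Plug this into the leave-$(k-1)$-out potential $\rho_f$ and the threshold learner. Mistakes are no longer zero, but each lowers $\rho_f$ by at least $\alpha$, while each round raises it by at most $c\binom{t-1}{k-2}$. Hence $\errmis\le\frac{c}{\alpha}\binom{T}{k-1}$.

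On the abstention side, the pigeonhole over $k$-subsets that you gestured at does the work. Each $k$-subset of the attackable set $H$ contributes a point together with a $(k-1)$-witness set. This produces a point of $H$ with $\Omega\bigl((|H|/m)^{k-1}\bigr)$ witness sets, each giving a unit drop when that point's label is flipped. Abstention caps this drop at $\alpha$, up to an $O(cT^{k-2})$ correction, which gives $|H|=O\bigl(m(\alpha+cT^{k-2})^{1/(k-1)}\bigr)$. Exchangeability then bounds $\E[\errabs]$ by this quantity times $\log T$. Balancing $T^{k-1}/\alpha$ against $\alpha^{1/(k-1)}$ is what yields $\tilde{\mathcal{O}}(T^{1-1/k})$.
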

Inference dimension is a structural notion originating in active learning that captures when a small set of examples can certify the labels of other examples.

Second, we introduce \emph{certificate dimension}, a weaker notion than inference dimension that relaxes requirements tied to real-valued representations. In our certificate-dimension instantiation, these witnesses correspond to genuine certificates: a small $U$ whose certificate value determines the label of $x$ among hypotheses consistent with the rest of the data. Using certificate dimension, we obtain the first distribution-agnostic upper bound for halfspaces in this model:
\begin{theorem}[Halfspaces in $\mathbb{R}^2$, informal]
Halfspaces in $\mathbb{R}^2$ have certificate dimension~$3$, yielding expected combined error $\tilde{\mathcal{O}}(T^{2/3})$.
\end{theorem}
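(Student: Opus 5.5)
The statement has two parts. First, halfspaces in $\mathbb{R}^2$ have certificate dimension $3$. Second, the general certificate-dimension instantiation of the potential framework then gives combined error $\tilde{\mathcal{O}}(T^{1-1/k})$ with $k=3$, that is $\tilde{\mathcal{O}}(T^{2/3})$. The second part will follow by plugging into the certificate-dimension theorem once it is established, exactly as the inference-dimension bound of Theorem~\ref{thm: inference informal} follows from its score function. So the real work is the combinatorial claim.

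The plan is to show that among any sufficiently large labeled sample consistent with some halfspace, there are at most $3$ points $U$ and a point $x$ such that the labels on $U$, together with the remaining data, certify the label of $x$. The certificate value can be taken to be the sign pattern of orientation determinants.

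I would first reduce to a single label class. Fix a consistent labeled set $S$ and split it into positives $P$ and negatives $N$, which are linearly separable. A point $x\in P$ whose label is forced has a simple witness: $x\in\mathrm{conv}(U)$ for some $U\subseteq P$. By Carathéodory in $\mathbb{R}^2$ one can take $|U|\le 3$, and then any halfspace labeling $U$ positive labels $x$ positive. This gives a size-$3$ certificate whenever some point of $P$ (or of $N$) is not a vertex of the convex hull of its own class.

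The remaining case is when every point of $P$ and every point of $N$ is in convex position within its class. Here I would use the separator structure. The boundary chains of $\mathrm{conv}(P)$ and $\mathrm{conv}(N)$ facing each other determine the feasible separating lines. Points on the far side of each hull are irrelevant to the version space, so their label is forced given the rest: a far vertex $x$ of $P$ lies in the intersection of halfspaces that contain two adjacent near-side vertices $p_1,p_2$ of $P$ while excluding some $n\in N$. Concretely, I aim to show that the triangle-cone formed by $p_1,p_2$ and the line through them, on the side away from $n$, is labeled positive by every halfspace that labels $p_1,p_2$ positive and $n$ negative. Then $U=\{p_1,p_2,n\}$ together with an orientation test certifies $x$. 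A pigeonhole or counting argument should show that among $m$ points in convex position, only $O(1)$ lie on the near chains in a way that escapes such a certificate. This bounds the number of non-certifiable points, which is what certificate dimension asks for.

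The main obstacle is the convex-position case. I need to verify that the "uncertified" points are genuinely few for every $x$, and match whatever exact quantifiers the certificate-dimension definition requires (for example, "for every set of size $k$ some point is certified by the others"). I would first try the cleanest form: in any $4$ labeled points consistent with a halfspace, some point's label is determined by the other three. This is a finite case analysis on order types of $4$ points in the plane: either one point lies in the triangle of the others, or the four points are in convex position and the $2$--$2$, $1$--$3$ and $4$--$0$ label splits are checked by hand. In the $1$--$3$ case, the lone point must be certifiable by opposite-class arguments. I would check each case by drawing the four points and confirming that every consistent halfspace agrees on one point. Finally, I would exhibit $3$ points in general position, shattered by halfspaces, where no point is forced by the other two, so that the dimension is exactly $3$.
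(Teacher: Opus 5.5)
\textbf{There is a genuine gap: your certificate never depends on the hypothesis.} As written, your argument is pure label inference. The labels on $U$, or the ``sign pattern of orientation determinants'' of the data points, take the same value for every $h'$. So the condition in the definition of certificate dimension collapses to ``every $h'\in\Vcal(S\setminus\{(x,y)\})$ has $h'(x)=y$''.

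For halfspaces in $\mathbb{R}^2$ this fails for every window size $m$:
\begin{itemize}
\item Take $m$ points in convex position, all labeled $+1$. Each vertex can be cut off by a line, so no label is determined by the other $m-1$.
\item A fixed negative anchor does not help. With $s_-=(0,0)$ and positives $(a,1+a^2)$, $|a|<1$, the segment from $s_-$ to each positive meets the epigraph of $1+a^2$ only at that point. Hence each positive can be cut off together with $s_-$, and all $m$ points lie on the near chain without any being forced.
\end{itemize}

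It follows that several of your specific claims fail:
\begin{itemize}
\item The claim that only $O(1)$ near-chain points escape certification is false, by the anchored example above.
\item Your fallback, that in any $4$ consistent labeled points one label is determined by the other three, is false. Four positives in convex position are a counterexample, and so is a unit square split $2$--$2$ by a vertical line.
\item The ``dimension exactly $3$'' step is not needed for the $T^{2/3}$ bound. It also proves no lower bound once $\sigma$ may depend on $h$.
\end{itemize}
Your Carath\'eodory case is fine and matches the paper's first case.

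\textbf{The missing idea is a hypothesis-dependent, finite-valued certificate on a pair.} The paper takes $\sigma(h,\{x_a,x_b\})$ to be the sign of $w_h^\top x_a-w_h^\top x_b$, which is a comparison query with $n=3$. The proof then runs as follows.
\begin{enumerate}
\item The learner first spends at most two mistakes, and no abstentions, to collect one example of each label. This restricts the class to halfspaces with $h(s_+)=+1$ and $h(s_-)=-1$, so an opposite-label anchor always exists. You never address this reduction, and your all-positive window has no anchor at all.
\item Among $m=7$ points, four share a label, say $+1$. If one lies in the hull of the others, its label is forced.
\item Otherwise Radon's theorem gives crossing segments $x_0x_1$ and $x_2x_3$. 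Choose $x_0$ to minimize $w_h^\top$, and $x_2$ on $s_-$'s side of the line through $x_0,x_1$.
\item A second application of Radon to $\{x_0,x_1,x_2,s_-\}$ has three possible outcomes: it forces a label through the version space, it is geometrically impossible, or it gives $x_2\in\mathrm{conv}\{x_0,x_1,s_-\}$.
\item In the last outcome, every $h'$ with $h'(x_1)=-1$ satisfies $w_{h'}^\top x_2<w_{h'}^\top x_0$. This contradicts $\sigma(h,\{x_0,x_2\})$, since $w_h^\top x_0\le w_h^\top x_2$. So $x_1$ is certified by a certificate on two points, which gives $k=3$.
\end{enumerate}

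Note also the bookkeeping. In the paper's convention $|U|=k$ includes the certified point. Your witness $\{p_1,p_2,n\}$ together with $x$ would therefore be $k=4$. If that witness needed a nontrivial certificate, it would only give $\tilde{\mathcal O}(T^{3/4})$.
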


This result is particularly striking in light of prior work on clean-label attacks \citep{blum2021robustlearningcleanlabelattack} that showed that halfspaces (in $\mathbb{R}^2$) are \emph{not} robustly learnable under clean-label poisoning without margin assumptions, and even with positive margin, robust learning requires sample complexity exponential in the dimension. Our result demonstrates that abstention fundamentally changes the landscape, the adversary can no longer succeed just by using many adversarial points to attack one sample.

\subsection{Related Work}
\paragraph{Adversarial Injection Model and related models.} Our work builds on \cite{goel2024adversarialresiliencesequentialprediction}, which introduced the adversarial injection model and provided the first upper bounds in both the known-$\D$ and distribution-agnostic regimes. Recently, \cite{heinzler2025adversarialresiliencecleanlabelattacks} gave new results in the agnostic variant of the model. This adversarial injection model can be viewed as the online version of PQ learning \citep{NEURIPS2020_b6c8cf4c,kalai2021efficient, kalai2021optimally,goel2024tolerantalgorithmslearningarbitrary} where the learner receives labeled training samples from a source distribution $P$ and unlabeled samples from a (potentially adversarial) test distribution $Q$, and produces a selective classifier which (1) must make low error on instances it chooses to predict, and (2) not abstain on samples from $P$. Another closely related model is the TDS (Testable Distribution Shift) model \cite{klivans2023testable,klivans2024learning,chandrasekaran2024efficient, goel2024tolerantalgorithmslearningarbitrary} which allows the learner to entirely reject the test set if it is not close to $P$. In these settings the learner always has access to a clean train dataset, and a large test set, whereas in the online setting the train set so far can contain adversarial samples, and we must predict or abstain on a single test sample. This model also broadly relates to work in selective classification and abstention \cite{chowopt,statsabs1, statsabs2, statsabs3,zhang2016extended, cortes2019online, neu2020fast,li2008knows,sayedi2010trading}. We refer the reader to \cite{goel2024adversarialresiliencesequentialprediction} for a discussion on this.

\paragraph{Clean-label attacks and exchangeability.}
\cite{shafahi2018poison,blum2021robustlearningcleanlabelattack} study offline robust learning under clean-label data poisoning, where an attacker adds correctly labeled samples to force mistakes on a specific test point. In this setting, halfspaces even in $\mathbb{R}^2$ are not robustly learnable. The critical difference is that their attacker adds many adversarial samples to attack a specific test sample. In our setting, by contrast, if the attacker uses many adversarial samples to attack a single iid sample, the learner's overall error can still be low. As long as the learner either abstains or correctly predicts the labels for the injected adversarial samples, misclassifying the targeted test sample has minimal impact.
\cite{larsen2026learning} study a weaker setting where the adversary inserts additional correctly labeled points after observing an i.i.d.\ sample, and the learner is evaluated by test error on a fresh i.i.d.\ draw. They show that optimal PAC learners like One-Inclusion Graph and majority voting can be made suboptimal, while ERM remains robust. However, in our setting ERM is not enough, since we have to make predictions/abstentions on the injected rounds as well. 

\paragraph{Active learning with comparison queries.} Inference dimension originates in active learning with comparison queries \citep{kane2017activeclassificationcomparisonqueries}, where it characterizes when a small set of labeled samples can certify the labels of other samples. Subsequent work studies the power of comparisons for learning linear separators \citep{hopkins2020powercomparisonsactivelylearning} and \emph{reliable} active classification \citep{pmlr-v125-hopkins20a}, where the learner may abstain on uncertain samples, a paradigm closely related to ours. While these works focus on active learning with oracle access to labels and comparisons, we show that inference dimension also allows learnability in our passive, distribution-agnostic setting, where the learner cannot choose which samples to query by helping certify labels for other i.i.d. samples. \cite{kane2018generalizedcomparisontreespointlocation} generalizes comparison queries, allowing for an infinite class of queries comparing two samples, allowing them to break lower bounds set in the prior work. Unfortunately, in our setting, the finite number of queries is essential to achieve non-trivial bounds. \cite{kontonis2024activelearningsimplequestions} also introduces more generalized queries to active learning, allowing the learner to ask whether a set of samples all have a certain label, achieving much lower query bounds. In our setting, while the learner's error bound can depend on the number distinct possible answers to queries, these more generalized queries are not directly applicable.

\section{Preliminaries}\label{sec:prelims}
\paragraph{Notation.}
We denote the domain by $\X$ and consider a concept class $\C\subseteq \{\pm1\}^{\X}$. We denote the VC dimension of the class by $d$. We work in the realizable setting where labels are generated by an unknown target $c^\star\in\C$.
For a positive integer $n$, we write $[n]:=\{1,\ldots,n\}$.
Let $\Lcal=\X\times \{\pm 1\}$ denote the space of labeled examples.
For any finite $S\subseteq \Lcal$, the \emph{version space} is
$\Vcal(S):=\{h\in \C \sep h(x)=y \text{ for all } (x,y)\in S\}$.
For any $V\subseteq \C$, $x\in \X$, and $\ell\in\{\pm1\}$, define the restriction $V_{x\to \ell}:=\{h\in V \sep h(x)=\ell\}$.
A sample $(x,y)$ is \emph{realizable} in $V$ if $V_{x\to y}\neq \emptyset$.
We denote the set of all version spaces by  $\mathbb{V}:=\{\Vcal(U)\sep U\subset \Lcal\}$ .

Let $\mathcal{P}_n(\mathcal{L})$ denote the set of subsets of $\Lcal$ of size $n$, that is, 
$$\mathcal{P}_n(\mathcal{L})=\{U\subseteq \Lcal\sep |U|=n\}$$
We also define $\sign:\mathbb{R}\to\{+1,-1\}$ as $$\sign(x)=\begin{cases}
    +1&x\geq 0\\
    -1& x<0
\end{cases}$$
\subsection{Adversarial Injection model}
Here we describe the adversarial injection model of \citep{goel2024adversarialresiliencesequentialprediction}. At the start, an adversary chooses a distribution~$\D$ over~$\X$ and a target $c^\star\in\C$; both remain fixed throughout. The interaction proceeds as follows:

\crefalias{algocf}{protocol}
\SetAlgorithmName{Protocol}{protocol}{protocols}
\begin{algorithm}
    \SetAlgoLined
    \caption{Adversarial Injection Model}
    \label{prot:adversarial_injection}
    \SetKwIF{If}{ElseIf}{Else}{if}{:}{else if}{else}{}
    \SetKwFor{For}{for}{:}{}
    \KwIn{Time horizon $T$}
    
    \For{$t=1, \dots, T$}{
        Adversary chooses a hidden bit $q_t \in \{0,1\}$\;
        \eIf{$q_t=0$}{
            Nature draws $x_t \sim \mathcal{D}$ independently of the past (i.i.d.\ sample)\;
        }{
            Adversary chooses $x_t \in \mathcal{X}$ (injected sample)\;
        }
        Learner observes $x_t$ and outputs $\hat{y}_t \in \{\pm 1, \perp\}$\;
        Learner observes the true label $y_t = c^\star(x_t)$\;
    }
\end{algorithm}
\crefalias{algocf}{algorithm}
\SetAlgorithmName{Algorithm}{algorithm}{Algorithms}
Note that the learner never observes $q_t$, and the adversary may be adaptive: on round~$t$, both its choice of $q_t$ and (if $q_t=1$) the injected sample $x_t$ may depend on the history up to time~$t-1$. Unless otherwise noted, we assume $\D$ is hidden from the learner. We write $S_t:=\{(x_s,y_s)\sep 1\le s\le t\}$ for the set of labeled samples seen up to time~$t$, and $V_t:=\Vcal(S_t)$ for the corresponding version space.\footnote{Duplicates do not affect version spaces, and the order is generally unimportant, so we treat histories as sets.}
Under realizability, $V_t$ is always nonempty.

\paragraph{Objective.}
The learner is penalized for two types of errors: \emph{misclassifications}, where the learner predicts but gets the label wrong, and \emph{incorrect abstentions}, where the learner abstains on an i.i.d.\ round. Crucially, abstaining on injected rounds incurs no cost. Formally:
\begin{align*}
    err_{\mathrm{mis}}
    :=\sum_{t=1}^T \1\bigl[\hat y_t\neq y_t\ \wedge\ \hat y_t\neq \perp\bigr] \qquad\text{and}\qquad
    err_{\mathrm{abs}} :=\sum_{t=1}^T \1\bigl[\hat y_t=\perp\ \wedge\ q_t=0\bigr].
\end{align*}
We study the expected combined error $\E[err_{\mathrm{mis}}+err_{\mathrm{abs}}]$, where the expectation is over the i.i.d.\ draws, any learner randomization, and (when applicable) adversary randomization.
\section{The limits of unknown distributions}\label{sec:lower-bound}

We begin by establishing the inherent cost of distribution-agnostic learning.
In the unknown-$\D$ regime, even very simple classes force $\Omega_d(\sqrt{T})$ total error, ruling out the $\mathcal{O}_d(\log T)$ rates achievable with marginal access and setting the baseline for the upper bounds later in the paper.

\begin{theorem}[A hard VC-1 class]\label{thm:vc1-lb}
There exists a concept class $(\X,\C)$ with $\VCdim(\C)=1$ such that for every horizon $T$, there exists an adversary/distribution choice in the unknown-$\D$ injection model such that for every learner,
\[
\E[err_{\mathrm{mis}}] + \E[err_{\mathrm{abs}}]=\Omega(\sqrt T).
\]
\end{theorem}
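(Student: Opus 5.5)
We construct an indistinguishability argument: two scenarios produce identical observed streams with constant probability. In one, the learner must predict on certain points to avoid abstention cost. In the other, predicting on those same points produces mistakes. The class is \emph{thresholds on a tree} (or on a line/star), which has VC dimension $1$.

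\textbf{Class and distributions.} Take $\X$ to be a star: a root $r$ and leaves $\ell_1,\dots,\ell_N$ with $N\gg T$. The concepts are $h_\emptyset$, which labels everything $-1$, and $h_i$, which labels only $\ell_i$ as $+1$. This class has VC dimension $1$, since no two points can both be labelled $+1$. Let $\D$ be uniform on the leaves, so that with high probability the i.i.d.\ draws are all distinct and each has label $-1$ under $c^\star=h_\emptyset$.

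\textbf{The tension.} An i.i.d.\ leaf is a fresh point whose label is $-1$ with near certainty. An injected fresh leaf, however, may be the unique positive point. A distribution-agnostic learner sees only a fresh leaf in either case. We make the following choice.
\begin{itemize}
\item \textbf{Scenario A.} Every round is i.i.d.\ with $c^\star=h_\emptyset$. The learner must predict $-1$ on fresh leaves in all but $O(\sqrt T)$ rounds.
\item \textbf{Scenario B.} The stream is a mixture of i.i.d.\ rounds and injected fresh leaves, and $c^\star=h_i$ where $\ell_i$ is one injected leaf chosen uniformly among $\sqrt T$ injections.
\end{itemize}
The difficulty is that Scenario B costs only one mistake per instance. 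To amplify the cost to $\sqrt T$, we use a construction with $\sqrt T$ independent ``blocks.''

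\textbf{Block construction.} Take a disjoint union of $m=\sqrt T$ stars, with the concept class being thresholds on a tree. This can be encoded as a chain of stars (a caterpillar), on which the tree thresholds still have VC dimension $1$. In block $j$, the adversary either runs a phase of $\sqrt T$ i.i.d.\ rounds from block $j$'s distribution, or injects $\sqrt T$ fresh leaves, one of which (chosen at random) is positive.
\begin{itemize}
\item On an i.i.d.\ phase, the learner must predict on a constant fraction of rounds or pay $\Omega(\sqrt T)$ in abstentions.
\item On an injected phase, the observations before the positive leaf appears are distributed identically to the i.i.d.\ phase. So if the learner predicts $-1$ on a $p$ fraction of rounds, it errs on the positive leaf with probability about $p$.
\end{itemize}
Averaging over a random choice of phase type gives per-block expected cost $\Omega(\min(\sqrt T(1-p),\,p))$. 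This is weak, since one mistake against $\sqrt T$ abstentions only yields $\Omega(1)$ per block.

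\textbf{Repairing the accounting.} Rebalance the blocks so that each has length $L$ and there are $T/L$ blocks. In the injected case, an $\epsilon$ fraction of the leaves are positive. To stay realizable within VC dimension $1$, use a path of thresholds: positives are a prefix of the injected sequence in an order unseen by the learner. This is indistinguishable from i.i.d.\ until a positive label is revealed. I would tune the blocks so that mistakes scale like $\sqrt T$ per phase while abstentions scale like $T/\sqrt T$, balancing at $\sqrt T$.

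\textbf{Main obstacle.} The hardest step is exhibiting a single VC-1 class in which an injected prefix can generate many mistakes for any learner that predicts confidently on i.i.d.-looking points. At the same time, the i.i.d.\ version must force confident prediction. I would try thresholds on a line, with $\D$ uniform on $[0,1]$ and target threshold $\theta$. The adversary injects points marching toward $\theta$ from the unlabelled side, whose labels look like i.i.d.\ points in the gap. Once the gap near the threshold is shrunk to width $1/\sqrt T$ by injections, roughly $\sqrt T$ i.i.d.\ points fall into that ambiguous region over $T$ rounds. The learner must then either abstain on them or risk mistakes, since the adversary's two consistent thresholds are symmetric. A two-point (Le Cam) argument over the two thresholds finishes the bound.
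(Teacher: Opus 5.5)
Your proposal does not reach a proof. The block construction as written is not a legal adversary. The fallback you end with provably cannot give $\Omega(\sqrt T)$.

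First, all i.i.d.\ rounds must be drawn from one fixed $\D$, so you cannot have many blocks each running ``i.i.d.\ rounds from block $j$'s distribution.'' The missing idea is to make \emph{exactly one} block i.i.d.: pick $r\sim\mathrm{Unif}([B])$, set $\D:=D_r$, and inject draws from $D_i$ in every other block. The transcript is then independent of $r$, so $\E[\errabs]=\frac1B\sum_i\E[A_i]$, and an abstention costs only $1/B$ in expectation. Meanwhile, the $j$-th fresh support point of a block (before its positive has appeared) is the positive child with conditional probability $1/(B-j+1)$. So any prediction there errs with probability at least $1/B$. This balance gives $\Omega(1)$ per block and $\Omega(\sqrt T)$ in total. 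Your own per-block $\Omega(1)$ would likewise have sufficed over $\sqrt T$ blocks, so the ``repair'' step was unnecessary. Also, averaging over phase types gives a sum of the two costs, not a minimum; at $p=0$ the i.i.d.\ phase alone costs $\sqrt T$.

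Second, the caterpillar cannot host a positive in every block. An initial segment of a caterpillar contains at most one pendant leaf, so predicting $-1$ on all leaves costs at most one mistake overall. You need a complete $B$-ary tree of depth $B$, where block $i$ is supported on the children of $\theta_{i-1}$ and its unique positive $\theta_i$ is the parent of block $i+1$'s support. Then the positives of all blocks lie on a single root-to-leaf path, which keeps VC dimension $1$. The labels in different blocks are independent, and nothing learned in earlier blocks helps later.

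The thresholds-on-a-line fallback fails outright. For thresholds, the learner that predicts only when the version space is unanimous never errs. The $i$-th i.i.d.\ point can land in the disagreement interval only if it is the largest negative or the smallest positive among the first $i$ i.i.d.\ points. By exchangeability this has probability at most $2/i$, so $\E[\errabs]=O(\log T)$ regardless of the injections. Injected points only shrink the gap, and each i.i.d.\ point that lands in it shrinks it further. Hence the claim that roughly $\sqrt T$ i.i.d.\ points fall into a width-$1/\sqrt T$ gap is false, and no two-point argument can yield $\Omega(\sqrt T)$ there. The lower bound genuinely requires many fresh, equally plausible candidates for the positive at each stage, which is what the high-branching tree supplies.
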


\paragraph{Proof Sketch}
We have $\X$ be the nodes of a rooted tree of depth at least $\sqrt{T}$, with each non-leaf node having at least $\sqrt{T}$ children. Our VC-1 concept class is root to leaf paths in the tree, that is, for each root to leaf path, there is a concept that is $+1$ on each node in the path, and $-1$ on all other nodes. See \Cref{fig:tree} for an illustration. 

Our adversary will make it so that each time the learner makes a mistake, they won't learn anything useful for future predictions. The adversary first uniformly at random chooses a node $\theta_B$ at depth $B=\Theta(\sqrt{T})$. Let $\theta_0,\theta_1,\dots \theta_B$ denote the path from the root ($\theta_0$) to $\theta_B$. We split time into $B$ contiguous blocks of length $\sqrt{T}$. For each block $i\in [B]$, let $D_i$ the uniform distribution over $B$ children of $\theta_i$, including $\theta_{i+1}$ as one of the children in the support. The adversary then chooses (uniformly at random) exactly one of the distributions $D_i$ to be the global i.i.d.\ distribution. Then, in each the $i^{th}$ block of time, the adversary either injects samples from $D_i$, or allows i.i.d.\ samples if $D_i$ is the i.i.d.\ block. The choice of which $D_i$ is i.i.d.\ has no affect on the samples themselves, so the learner's perspective is identical no matter which $D_i$ is chosen.

Due to the independence of $q_t$ (whether the sample at time $t$ is i.i.d.) and $x_t$ (the actual sample at time $t$) for our adversary, whenever the learner abstains, their expected abstention error, over the randomness of the adversary, is the probability that the sample is i.i.d., $1/B$. 

We now use the fact that the labels in each block are independent from the labels of the other blocks to analyze each block in isolation. During block $i$, until $\theta_i$ is revealed, from the perspective of the learner, each label $y_t$ is $+1$ with probability $1/\sqrt{T}$ (and $-1$ otherwise). Since the adversary is oblivious, we can use Yao's minimax principle, which shows that when not abstaining, and before $\theta_i$ is revealed, the learner can do no better than predicting $\hat{y}_t=-1$, getting expected misclassification error $\frac{1}{\sqrt{T}}$. A standard coupon counting argument shows that with probability very close to $1/e$, $\theta_i$ is never sampled, allowing us to lower bound the expected misclassification error conditioned on not abstaining with $\frac{1}{e\sqrt{T}}$.

Therefore, at each time $t$, the learner either incurs expected abstention error $\frac{1}{B}=\frac{1}{\Omega(B)}$, or expected misclassification error at least $\frac{1}{e\sqrt{T}}=\frac{1}{\Omega(B)}$, so over $T$ rounds, the expected sum of the two errors is $\Omega(B) = \Omega(\sqrt{T}).$

For the formal derivation, see Appendix \ref{sec:proof of vc1}.

\begin{figure}
    \centering
    \resizebox{0.7\columnwidth}{!}{{
        \usetikzlibrary{arrows.meta, positioning, calc, positioning, quotes}
\renewcommand{\seriesdefault}{\bfdefault}
\boldmath
\begin{tikzpicture}[
    node distance=1.5cm and 8mm,
    every node/.style={circle, draw, minimum size=8mm, line width=2pt},
    label/.style={rectangle, midway, fill=white, draw=none,minimum size=0.1mm, inner sep = 1mm},
    selected/.style={circle, draw, fill=blue!20, line width=2.5pt, minimum size=8mm},
    edge/.style={->, >=Stealth, line width=2pt},
    selected edge/.style={->, >=Stealth, thick, blue!70, line width=2.5pt},
]
\def\layersep{3}
\def\symb{\theta}

\node[selected] (root) at (0,0) {};

\node[draw=none, font=\Large] at (0,5) {Layer 0 (root)};

\node (c1) at (\layersep,3) {};
\node (c2) at (\layersep,1.2) {};
\node[draw=none] (cdots1) at (\layersep,-0.5) {$\vdots$};
\node[selected] (c3) at (\layersep,-2.2) {};
\node[draw=none] (cdots2) at (\layersep,-3.5) {$\vdots$};
\node (ck) at (\layersep,-5) {};

\draw[edge] (root) -- node[label] {$1$} (c1);
\draw[edge] (root) -- node[label] {$2$} (c2);
\draw[selected edge] (root) -- node[label] {$\symb_1$} (c3);
\draw[edge] (root) -- node[label] {$B$} (ck);

\node[draw=none, font=\Large] at (\layersep,5) {Layer 1};

\node (c3c1) at ({2*\layersep},-0.8) {};
\node[draw=none] (c3dots1) at ({2*\layersep},-1.5) {$\vdots$};
\node[selected] (c3c2) at ({2*\layersep},-2.2) {};
\node[draw=none] (c3dots2) at ({2*\layersep},-2.9) {$\vdots$};
\node (c3c3) at ({2*\layersep},-3.6) {};

\draw[edge] (c3) -- node[label] {$1$}(c3c1);
\draw[selected edge] (c3) --  node[label] {$\symb_2$}  (c3c2);
\draw[edge] (c3) --  node[label] {$B$} (c3c3);

\draw[edge, gray!50] (c1) -- node[label] {$1$} ({2*\layersep},3.5);
\draw[edge, gray!50] (c1) --  node[label] {$B$} ({2*\layersep},2.5);
\node[draw=none, gray!50] at ({1.8*\layersep},3) {$\vdots$};
\draw[edge, gray!50] (c2) -- node[label] {$1$} ({2*\layersep},1.7);
\draw[edge, gray!50] (c2) --  node[label] {$B$} ({2*\layersep},0.7);
\node[draw=none, gray!50] at ({1.8*\layersep},1.2) {$\vdots$};
\draw[edge, gray!50] (ck) -- node[label] {$1$} ({2*\layersep},-4.5);
\draw[edge, gray!50] (ck) --  node[label] {$B$}  ({2*\layersep},-5.5);
\node[draw=none, gray!50] at ({1.8*\layersep},-5) {$\vdots$};

\node[draw=none, font=\Large] at ({2*\layersep},5) {Layer 2};

\def\gap{.3}
\node[draw=none, font=\Large] (hdots) at ({(2.8+\gap)*\layersep},0) {$\hdots$};
\node[draw=none, font=\Large] at ({(2.8+\gap)*\layersep},5) {$\hdots$};

\node[selected] (pen2) at ({(4+\gap)*\layersep},-2.8) {};

\draw[selected edge] (c3c2) -- node[label, near start] {$\symb_3$} ({(2.8)*\layersep}, -3.2);
\draw[selected edge] ({(\gap+3.4)*\layersep}, -0.5) -- node[label] {$\symb_{B-1}$}(pen2);

\node[draw=none, font=\Large] at ({(\gap+4)*\layersep},5) {Layer $B-1$};

\node[rectangle] (leaf1) at ({(\gap + 5)*\layersep},-1.3) {};
\node[draw=none] (leafdots1) at ({(\gap + 5)*\layersep},-2.15) {$\vdots$};
\node[selected, rectangle] (leaf2) at ({(\gap + 5)*\layersep},-3) {};
\node[draw=none] (leafdots2) at ({(\gap + 5)*\layersep},-3.85) {$\vdots$};
\node[rectangle] (leafk) at ({(\gap + 5)*\layersep},-4.7) {};

\draw[edge] (pen2) -- node[label] {$1$} (leaf1.west);
\draw[selected edge] (pen2) -- node[label] {$\symb_{B}$} (leaf2);
\draw[edge] (pen2) -- node[label] {$B$} (leafk.west);

\node[draw=none, font=\Large] at ({(\gap + 5)*\layersep},5) {Layer $B$ };

\end{tikzpicture}}}
    \caption{Hard VC-1 class. The edges are labeled to indicate the order of the children of the nodes. The path $\theta$ is highlighted in blue. }
    \label{fig:tree}
\end{figure}
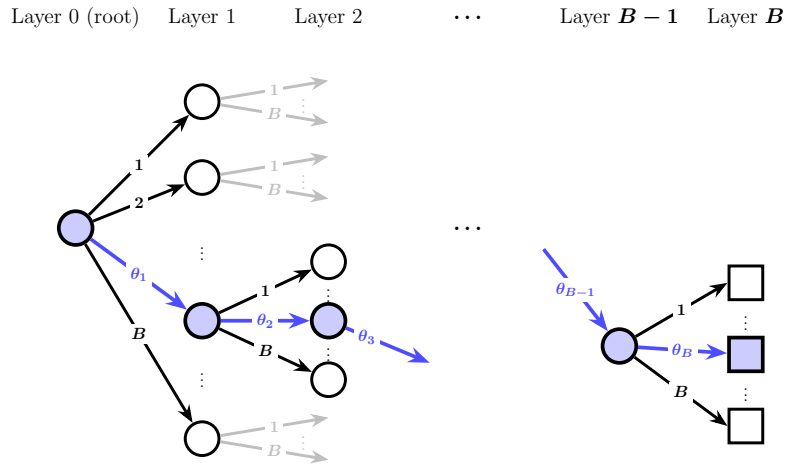

\section{A Potential-Based Framework for Learning}\label{sec:framework}
w
We now present a general potential-based framework for upper bounds in the adversarial injection model. Prior work \citep{goel2024adversarialresiliencesequentialprediction} gives two distribution-agnostic algorithms, one for VC dimension~$1$ classes and one for axis-aligned rectangles, that at first glance look quite different. At a high level, however, both follow the same principle: on a new instance $x_t$, the learner either abstains or makes a prediction that (if wrong) provably forces the version space to shrink in a meaningful way. Thus, every misclassification comes with a certificate of progress, yielding a mistake bound. The charged-abstention bound then follows by showing that i.i.d.\ samples cannot keep presenting rounds on which \emph{neither} label would certify progress by a more involved \emph{attackability} argument.

We make this connection explicit by introducing a family of \emph{leave-$k$-out} potentials and a generic learner parameterized by a score function $f$. The score $f(U;V)$ should be thought of as measuring how much \emph{uncertainty} remains about a small labeled set $U$ when the current feasible hypotheses are $V$. Aggregating this score over many subsets of the observed history is what makes the resulting potential robust to clean-label injections. This perspective was suggested as a future direction in \citep{goel2024adversarialresiliencesequentialprediction} via a particular leave-$k$-out shattering estimator; we allow a general $f$ and isolate simple sufficient conditions for both mistake and abstention guarantees.

\begin{definition}[Leave-$k$-out potential]
Fix an integer $k\ge 1$ and a function $f:\mathcal{P}_k(\mathcal{L}) \times \mathbb{V}\to \mathbb{R}$. The \emph{leave-$k$-out potential} of a finite labeled set $S\subseteq \Lcal$ is
\[
\rho_{f}(S):=\sum_{\substack{U\subseteq S\\|U|=k}} f\bigl(U;\Vcal(S\setminus U)\bigr).
\]
\end{definition}
\begin{algorithm}[t]
\caption{Potential Learner}\label{alg:potential-learner}
\DontPrintSemicolon
\SetKwIF{If}{ElseIf}{Else}{if}{:}{else if}{else}{}
\SetKwFor{For}{for}{:}{}
\KwIn{Function $f$; threshold $\alpha>0$}
$S_0\gets\emptyset$\;
\For{$t=1,2,\dots,T$}{
    Receive $x_t$\;
    \uIf{$\Vcal(S_{t-1})_{x_t\to +1}=\emptyset$}{$\hat{y}_t\gets -1$}
    \uElseIf{$\Vcal(S_{t-1})_{x_t\to -1}=\emptyset$}{$\hat{y}_t\gets +1$}
    \Else{
        \For{$b\in\{\pm1\}$}{
            $\Delta_{b}\gets \rho_f(S_{t-1})-\rho_f(S_{t-1}\cup\{(x_t,b)\})$\;
        }
        \uIf{$\max\{\Delta_{+1},\Delta_{-1}\}\ge \alpha$}{
            $b^\star\gets \arg\max_{b\in\{\pm1\}}\Delta_b$\;
            $\hat{y}_t\gets -b^\star$\;
        }
        \Else{$\hat{y}_t\gets \perp$}
    }
    Observe $y_t$ and update $S_t\gets S_{t-1}\cup\{(x_t,y_t)\}$\;
}
\end{algorithm}

Our learner (\Cref{alg:potential-learner}) predicts a label $\hat{y}_t\in\{-1,+1\}$ only when it can \emph{certify} that if this label is wrong, then the potential must drop by at least $\alpha$. Equivalently, if $b^\star$ is the label that would maximize the one-step drop, the learner predicts $-b^\star$ so that a mistake (i.e.\ $y_t=b^\star$) forces a drop of at least $\alpha$.

The remainder of the section shows that this learner achieves sublinear combined error whenever the score function $f$ satisfies a few properties.

\begin{definition}[\good function]
    Fix a concept class $\C$. We call a function $f:\mathcal{P}_k(\mathcal{L})\times \mathbb{V}\to [0,c]$ \emph{\good}if it satisfies:
\begin{enumerate}
    \item \emph{(Monotonicity)} For any $U\subseteq \Lcal,|U|=k,$ and version space $V$ consistent with the labels of $U$, and for any $(x,y)\in \Lcal$ realizable with $U$ in $V$, $$f(U;V)\geq f(U;V_{x\to y})$$
    \item \emph{($m$-Robustness)} For any $h\in \C$, for any realizable $M\subseteq \Lcal,|M|=m,$ labeled by $h$, there exists $U\subseteq M,|U|=k,$ and a witness $(x,y)\in M\setminus U$, such that for every realizable extension $A\supseteq M\setminus (U\cup\{(x,y)\})$ labeled by $h$, letting $V = \Vcal(A)$, either
    $$f(U;V)-f(U;V_{x\to -y}) \geq 1$$
    or $U$ is unrealizable in $V_{x\to -y}$
\end{enumerate}
\end{definition}

\begin{theorem}\label{thm:upper bound}
    \Cref{alg:potential-learner}, when run in the adversarial injection setting with an \good function $f$ and threshold $\alpha>0$, satisfies:
    \begin{align*}
        \errmis \leq \frac{c}{k! \alpha}T^k\qquad \text{and} \qquad \E[\errabs]\leq em\bigl(\alpha + cT^{k-1}\bigr)^{1/k}\log T
    \end{align*}
    We can asymptotically bound the combined error by choosing the optimal $\alpha$, giving
    \begin{align*}
        \mathbb{E}[\errmis+\errabs]=\tilde{\mathcal{O}}\left( \frac{c^{\frac{1}{k+1}}}{k}\left(mT\right)^{1-\frac{1}{k+1}}\right)
    \end{align*}
\end{theorem}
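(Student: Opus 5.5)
The proof splits into three parts: a deterministic potential argument for $\errmis$, an exchangeability argument for $\E[\errabs]$, and a final balancing of $\alpha$.

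\emph{Mistake bound.} We track $\rho_f(S_t)$, where $S_t$ is the history as in \Cref{alg:potential-learner}. Since $f\in[0,c]$ and $|S_t|\le T$, we have $0\le\rho_f(S_t)\le c\binom{T}{k}\le cT^k/k!$. We then show, using Monotonicity, that $\rho_f$ never increases when a realizable point $(x,y)$ is added. Subsets $U$ not containing $(x,y)$ contribute $f(U;\Vcal(S\setminus U)_{x\to y})\le f(U;\Vcal(S\setminus U))$. Subsets containing $(x,y)$ contribute new nonnegative terms, so this case needs more care. I would handle it by restricting $f$ to be zero whenever $U$ is unrealizable, or by noting that the algorithm only compares the two candidate extensions; if needed I would reformulate the bound as a bound on total decrease. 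Given this, each mistake occurs only when the true label $y_t=b^\star$ satisfies $\Delta_{b^\star}\ge\alpha$, so the potential drops by at least $\alpha$. Rounds forced by an empty restricted version space are never mistakes under realizability. Summing gives $\errmis\le cT^k/(k!\alpha)$.

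\emph{Abstention bound.} This step is the main obstacle. Call an i.i.d.\ sample $z=(x_t,y_t)$ \emph{attackable} if some realizable set of other labeled points could make the learner abstain on $z$. The learner abstains only if both $\Delta_{\pm1}<\alpha$, and in particular $\Delta_{-y}<\alpha$. For any multiset $M$ of $m$ points labeled by $c^\star$, $m$-Robustness gives a pair $U\subseteq M$ and $(x,y)\in M\setminus U$ such that, for every history $A$ containing the rest of $M$, placing $x$ with label $-y$ reduces the $U$-term by at least $1$ or makes it unrealizable. So if $x$ arrives after the rest of $M$, the learner cannot abstain unless the loss on that single term is offset.

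To make this sharp, I would instead use the aggregate form. Group the i.i.d.\ samples into blocks and apply robustness recursively: inside any set of $n$ i.i.d.\ points, the number of non-attackable points should be large. Concretely, a counting argument over the $\binom{n}{k}$ subsets should show that at most about $m(\alpha+cT^{k-1})^{1/k}$ points can be positions where abstention is possible. The intuition is that each abstention on an i.i.d.\ point that completes robust witnesses requires fewer than $\alpha$ total drop, while the witness terms sum to at least the number of witnessing $U$'s, and unrealizable $U$'s contribute at most $cT^{k-1}$ through terms involving the new point. Then exchangeability applies: the $j$-th i.i.d.\ sample is uniformly placed among the first $j$, so its abstention probability is at most $m(\alpha+cT^{k-1})^{1/k}/j$. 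Summing over $j\le T$ gives the $\log T$ factor, and the constant $e$ comes from a tail or Jensen step. I expect this exchangeability and attackability-counting step to be the delicate part, especially making the adversary's adaptivity compatible with exchangeability. I would handle that by defining attackability as a worst case over all possible injected extensions, so that it depends only on the set of i.i.d.\ samples.

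\emph{Combined error.} Finally, set $\alpha\approx (c/k!)^{k/(k+1)}T^{k^2/(k+1)}m^{-k/(k+1)}$ to balance $cT^k/(k!\alpha)$ against $m\alpha^{1/k}\log T$. Both terms then become $\tilde O\bigl(c^{1/(k+1)}(mT)^{1-1/(k+1)}/k\bigr)$ up to the stated factors, once we check that the $cT^{k-1}$ term is dominated. Stirling's bound on $k!$ supplies the $1/k$.
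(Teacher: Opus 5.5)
Your architecture (a certified potential drop on every mistake, then attackability plus exchangeability for abstentions, then balancing $\alpha$) matches the paper's. Two steps do not go through as written.

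\textbf{Mistake bound.} The claim that $\rho_f$ never increases is false. Indeed $\rho_f(S_0)=0$, so a nonnegative, non-increasing potential would be identically zero. On round $t$, up to $\binom{t-1}{k-1}$ new terms appear, namely those $U$ containing $z_t=(x_t,y_t)$. Since labels are clean, these $U$ are realizable and each can contribute up to $c$. Neither zeroing $f$ on unrealizable $U$ nor appealing to the comparison of the two candidate labels removes them. The correct bookkeeping, which your ``total decrease'' remark only gestures at, goes as follows. Monotonicity makes every old term non-increasing, so $\rho_f(S_t)-\rho_f(S_{t-1})\le c\binom{t-1}{k-1}$ on every round, and $\le-\alpha$ on mistake rounds. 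Summing and using $\rho_f(S_T)\ge 0$ gives $\alpha\,\errmis\le c\sum_{t\le T}\binom{t-1}{k-1}=c\binom{T}{k}$.

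\textbf{Attackable-sample bound.} This is the more serious gap, and it is the heart of the theorem; you only assert it (``a counting argument \dots should show''). The blocking/recursive use of robustness you suggest is not what is needed. The obstacle is that abstention constrains the counterfactual drop $\Delta_{-y}$ at a \emph{single} point. Robustness applied to different $m$-sets produces witness pairs $(U,(x,y))$ with different witness points, each worth only a unit drop.

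The missing idea is to concentrate witnesses on one point. Let $H$ be the set of attackable samples and apply $m$-robustness to every $m$-subset of $H$. Lower-bound the number of distinct witness pairs by a Tur\'an-type count; the paper gets $\binom{|H|}{k+1}/\binom{m}{k}$ via Sidorenko's bound. Then pigeonhole over witness points to find one $(x,y)\in H$ that is the witness for at least $\binom{|H|}{k+1}/(|H|\binom{m}{k})$ distinct sets $U\subseteq H\setminus\{(x,y)\}$.

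Now take the extension $A\supseteq H\setminus\{(x,y)\}$, $|A|\le T$, on which the learner abstains at $x$:
\begin{itemize}
\item Each such $U$ lies in $A$ with $A\setminus U\supseteq M\setminus(U\cup\{(x,y)\})$, so robustness applies to $V=\Vcal(A\setminus U)$.
\item Robustness's unrealizable alternative is ruled out precisely because abstention requires $\Vcal(A)_{x\to -y}\neq\emptyset$. This, not a charge for ``unrealizable $U$'s'', is how that branch is handled.
\item All other old terms are non-increasing, by monotonicity (again valid by abstention).
\item The new terms containing $(x,-y)$ cost at most $c\binom{T}{k-1}$.
\end{itemize}
Hence $\alpha>\Delta_{-y}\ge\binom{|H|}{k+1}/(|H|\binom{m}{k})-c\binom{T}{k-1}$. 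Binomial estimates turn this into $|H|=O\bigl(m(\alpha+cT^{k-1})^{1/k}\bigr)$. That is where the factor $e$ in the statement comes from, not from a tail or Jensen step.

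\textbf{Exchangeability.} Taking a worst case over injected extensions is not the whole story. The number of i.i.d.\ rounds may depend adaptively on the draws. The paper handles this by padding with fictitious draws $x_{T+i}\sim\D$, so that $(x_{t_1},\dots,x_{t_T})$ is a genuinely i.i.d.\ sequence.
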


\subsection{Proof Sketch for \Cref{thm:upper bound}}

The proof of \Cref{thm:upper bound} works by separately bounding each type of error as a function of $\alpha$, and then choosing $\alpha$ to balance them. See full proof in Appendix~\ref{sec:proof of framework}.

\paragraph{Bounding $\errmis$.}
We will call $\rho_f(S_t)-\rho_f(S_{t-1})$ the one-step \emph{increase}.
By the learner's rule, every mistake forces a certified drop, i.e.\ on any round with $\hat y_t=-y_t$ we have $\rho_f(S_{t-1})-\rho_f(S_{t})\ge \alpha$,

It remains to upper bound how much the potential can increase on an arbitrary round. Let $z_t:=(x_t,y_t)$ denote the new labeled sample.
When we add $z_t$ to form $S_t=S_{t-1}\cup\{z_t\}$, the potential $\rho_f(S_t)$ consists of (i) terms indexed by $k$-subsets $U$ that do \emph{not} contain $z_t$, and (ii) new terms indexed by $k$-subsets that \emph{do} contain $z_t$.

For type-(i) subsets, $S_t\setminus U$ contains all of $S_{t-1}\setminus U$ plus $z_t$, so the corresponding version space only shrinks; by monotonicity of $f$, these terms can only decrease compared to $\rho_f(S_{t-1})$.
For type-(ii) subsets, we simply use boundedness $f\le c$ and count how many such subsets exist, namely $\binom{t-1}{k-1}$.
Thus the net increase satisfies
\[
\rho_f(S_t)-\rho_f(S_{t-1}) \;\le\; c\binom{t-1}{k-1}\qquad\text{for all }t.
\]

Finally, sum increments over time:
\[
\rho_f(S_T)=\sum_{t=1}^T \rho_f(S_t)-\rho_f(S_{t-1}) \;\ge\; 0.
\]
Splitting the sum into mistake rounds (each contributing at most $-\alpha$) and all rounds (each contributing at most $c\binom{t-1}{k-1}$) yields
\[
0 \;\le\; \rho_f(S_T) \;\le\; -\alpha\,\errmis \;+\; c\sum_{t=1}^T \binom{t-1}{k-1}
\;=\; -\alpha\,\errmis \;+\; c\binom{T}{k}.
\]
Rearranging gives us $\errmis \le \frac{c}{\alpha}\binom{T}{k}$.

\paragraph{Bounding $\E[\errabs]$.} 

The abstention analysis is the main use of $m$-robustness. Informally, robustness says that in any realizable set $M$ of size $m$, there is a \emph{witness} point $(x,y)\in M$ and a \emph{witness} $k$-subset $U\subseteq M$ such that, no matter what additional (realizable) samples are later added outside of $U\cup\{(x,y)\}$, flipping the label of $x$ forces $f(U;\cdot)$ to drop by at least~$1$.

We will show that for any large enough realizable set $H$, the learner will predict on all but a few points once it has seen the rest of $H$ (even allowing further truthful injections). Fix a realizable labeled set $H$ and let $n:=|H|$. Suppose (for contradiction) that for every $(x,y)\in H$ there exists a truthfully labeled extension $A_{x,y}\supseteq H\setminus\{(x,y)\}$ with $|A_{x,y}|\le T$ such that the learner would abstain on $x$ given history $A_{x,y}$. Applying $m$-robustness to all $M\subseteq H,|M|=m
,$ and a pigeonhole argument yields a point $(x,y)\in H$ with $\Omega\left(n^k/m^k\right)$ distinct witness subsets $U\subseteq H\setminus\{(x,y)\}$ of size $k$, each forcing a unit drop in the corresponding leave-$k$-out term when we flip the label of $x$.

Summing these contributions lower bounds the counterfactual potential drop $\rho_f(A_{x,y})-\rho_f(A_{x,y}\cup\{(x,-y)\})$ to be at least $\Omega(n^k/m^{k})-\mathcal{O}(cT^{k-1})$. Since abstention means both potential drops (for $y$ and $-y$) are $<\alpha$, this quantity must be $<\alpha$. Rearranging gives $n=\mathcal{O}\!\left(m(\alpha+cT^{k-1})^{1/k}\right)$. Concretely, among any set of $i$ i.i.d.\ samples, at most $n$ of them can be points on which the learner could ever abstain once the history contains the other $i-1$ i.i.d.\ samples (allowing additional truthful injections up to total size $T$).

By exchangeability, this implies that the probability of abstaining on the $i$th i.i.d.\ draw is at most this quantity divided by $i$.
Bounding the sum over $i\le T$ yields the stated bound on $\E[\errabs]$.

\paragraph{Choosing $\alpha$.}
In the regimes of interest we have $\alpha \gtrsim cT^{k-1}$, so up to constants the bounds behave like $\errmis=\mathcal{O}(\frac{c}{k!\alpha} T^k)$ and $\E[\errabs]=\mathcal{O}(m\alpha^{1/k}\log T)$.
We choose $\alpha$ to balance these, giving combined error $\mathcal{O}\left(\frac{1}{k} c^{\frac{1}{k+1}} (mT \log T)^{1-\frac{1}{k+1}}\right).$

\subsection{Recovering Axis-aligned Rectangles and VC-1 Results}\label{subsec:prior art}

We first show that the two unknown-$\D$ upper bounds in \cite{goel2024adversarialresiliencesequentialprediction} are special cases of \Cref{alg:potential-learner}. Their analyses implicitly verify that the corresponding score functions are \good (for the appropriate choices of parameters), and thus follow from \Cref{thm:upper bound}.

\paragraph{Axis-aligned rectangles.}
For rectangles containing the origin, the learner's uncertainty is essentially \emph{coordinate-wise}: each dimension $i$ corresponds to whether the boundary in that coordinate has already been pinned down by the history.
\begin{definition}[Axis-aligned rectangles containing the origin in $\mathbb{R}^d_{\geq0}$]\label{def:axis aligned rectangles}
    We define axis-aligned rectangles in $d$ dimensions to be the concept class $\C=\{h_v\sep v\in \mathbb{R}_{\geq0}^d\}$ where $h_v(x)=-1 + 2\prod_{i=1}^d \mathbf{1}[x_i\leq v_i]$ is $+1$ if $x\in\mathbb{R}_{\geq0}^d$ is at most $v$ in every coordinate, and $-1$ otherwise.
\end{definition}
\begin{definition}[Rectangle score function]\label{def:rect-score}
Let $\C$ be axis-aligned rectangles containing the origin in $\X=\mathbb{R}_{\geq0}^d$.\footnote{It's very similar to show the same result for axis-aligned rectangles in $\mathbb{R}^d$, just increasing $m$ from $d$ to $2d$. We chose not to for the purpose of simplifying notation.}
Define $f_{\mathrm{rect}}$ on singletons $U=\{(x,y)\}$ by
\[
    f_{\mathrm{rect}}(U;V):=\left|\left\{i\in[d] \;\middle|\; (\delta_i(x),-1)\text{ is realizable in }V_{x\to y}\right\}\right|,
\]
where $\delta_i(x)\in\mathbb{R}_{\ge 0}^d$ is the vector with $(\delta_i(x))_i=x_i$ and $(\delta_i(x))_j=0$ for $j\neq i$.
\end{definition}

\begin{lemma}\label{lem:axis aligned rectangles}
    Fix $\C,\X$ to be axis-aligned rectangles containing the origin in $\mathbb{R}^d_{\geq 0}$. Let $f_{\mathrm{rect}}$ be the score function from \Cref{def:rect-score}. Then $f_{\mathrm{rect}}$ is a \good[2d+1][d][1] function, and \Cref{alg:potential-learner} with $f_{\mathrm{rect}}$ and correct choice of $\alpha$ achieves combined error $\tilde{\mathcal{O}}\left(d \sqrt{T}\right)$.

\end{lemma}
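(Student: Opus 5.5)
The plan is to check the two conditions in the definition of an \good[2d+1][d][1] function with $k=1$, and then apply \Cref{thm:upper bound}. The range condition is immediate: $f_{\mathrm{rect}}$ counts coordinates, so it takes values in $[0,d]$ and we may take $c=d$. Once robustness holds with $m=2d+1$, the bound of \Cref{thm:upper bound} with $k=1$ reads $\tilde{\mathcal O}\bigl(c^{1/2}(mT)^{1/2}\bigr)=\tilde{\mathcal O}\bigl(\sqrt{d\cdot(2d+1)T}\bigr)=\tilde{\mathcal O}(d\sqrt T)$, with $\alpha$ chosen as in that theorem.

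\emph{Monotonicity.} Write $U=\{(u,y_u)\}$. Restricting $V$ to $V_{x\to y}$ can only shrink $V_{u\to y_u}$. Realizability of $(\delta_i(u),-1)$ in a smaller version space implies realizability in the larger one. Hence each counted coordinate can only disappear, and $f$ is non-increasing.

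\emph{Structural observation.} If $y_u=+1$, every $h_v\in V_{u\to+1}$ satisfies $u\le v$ coordinatewise. Since $\delta_i(u)\le u$, such an $h_v$ contains $\delta_i(u)$, so $f_{\mathrm{rect}}(U;V)=0$. Consequently a unit drop can only come from a negative $U$. Conversely, for a negative $u$, coordinate $i$ is counted exactly when some $h_v\in V$ has $v_i<u_i$.

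\emph{Negative case of robustness.} Fix $M$ of size $2d+1$ labeled by $h_{v^\star}$, and suppose first that $M$ has at least $d+1$ negative points. Each negative point $w$ has a violated coordinate $i(w)$ with $w_{i(w)}>v^\star_{i(w)}$. By pigeonhole, two negatives $u,x$ share a violated coordinate $i$; order them so that $u_i\le x_i$. Set $U=\{(u,-1)\}$ and take $(x,-1)$ as the witness. Let $A\supseteq M\setminus\{u,x\}$ be any truthful extension and $V=\Vcal(A)$. Then $h_{v^\star}\in V_{u\to-1}$ and $v^\star_i<u_i$, so coordinate $i$ is counted in $f(U;V)$. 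In $V_{x\to+1}$ every rectangle satisfies $v_i\ge x_i\ge u_i$, so coordinate $i$ is no longer counted. By monotonicity no other coordinate can be added. Therefore either the drop is at least $1$, or $U$ is unrealizable in $V_{x\to+1}$.

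\emph{Main obstacle: the positive case.} When $M$ has at most $d$ negatives, it has at least $d+1$ positives. Here the structural observation forces $U$ to be a negative point, or else we must exhibit unrealizability of $U$ in $V_{x\to-y}$. My first attempt is to choose, for each coordinate $j$, the positive point of $M$ with the largest $j$-th coordinate; this uses at most $d$ points. A remaining positive $x$ is then dominated coordinatewise by the pointwise maximum of these extremal points. From this I would try to derive the unrealizability alternative: labeling $x$ negative conflicts with the positive $U$ together with the other extremal positives.

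The difficulty is that $U$ is a single point, while domination may require up to $d$ points, and the definition only guarantees that the extremal points other than $U$ lie in $A$. I would therefore try to mix cases. If there is at least one negative $u$, use the extremal positives (which lie in $A$) to pin down the coordinates of $u$ through a positive witness. I expect the count $2d+1 = d + (d+1)$ is exactly what makes this case split go through. Closing this case is the step I expect to require the most care; I may also need to reread how "realizable with $U$" is intended in the definition.
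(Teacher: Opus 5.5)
\textbf{Gap: the positive-majority case is left open.} Your range bound, monotonicity, the structural observation, the rate computation and the negative-majority case are all correct. Your pigeonhole over violated coordinates is a valid alternative to the paper's choice of a coordinatewise-extremal negative as witness. The gap is the case with at least $d+1$ positives, which you do not close. Your proposed repair (``if there is at least one negative $u$ \dots'') is vague, and it would not cover an $M$ consisting entirely of positives.

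\textbf{Missing observation.} The second alternative in $m$-robustness, ``$U$ is unrealizable in $V_{x\to -y}$'', asks whether some hypothesis in $\Vcal(A)_{x\to -y}$ is consistent with $U$ itself. So $U$'s own label is reimposed in that test, and it does not matter that $U\notin A$. Your worry that ``the definition only guarantees that the extremal points other than $U$ lie in $A$'' is therefore unfounded.

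\textbf{How to close the case.} Let $E$ be the at most $d$ coordinatewise-extremal positives, and let $(s,+1)$ be a positive point outside $E$.
\begin{itemize}
\item \emph{Your first attempt already works.} Take $U=\{(e,+1)\}$ for some $e\in E$ and witness $(s,+1)$. Any $h_v\in\Vcal(A)_{s\to -1}$ consistent with $U$ must contain every point of $E$. Hence $v_i\ge s_i$ for all $i$, so $h_v(s)=+1$, a contradiction.
\item \emph{The paper's choice is simpler.} Take $U$ to be any point of $M\setminus(E\cup\{(s,+1)\})$, which is nonempty because $2d+1\ge d+2$, again with witness $(s,+1)$. Every $A\supseteq M\setminus(U\cup\{(s,+1)\})$ then contains all of $E$. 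So $\Vcal(A)_{s\to -1}=\emptyset$ outright, and $U$ is vacuously unrealizable there.
\end{itemize}
In both versions the label of $U$ is irrelevant, so no negative point is needed and the all-positive $M$ is covered.
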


\paragraph{VC dimension 1.} We now recover the VC-$1$ upper bound.
We use a standard structural characterization \citep{bendavid20152notesclassesvapnikchervonenkis}: $\VCdim(\C)\le 1$ iff there exist a partial order $\prec$ over $\X$ such that every initial segment $I(x)=\{x'\in\X\sep x'\preceq x\}$ is totally ordered (a \emph{tree order}), and a reference labeling $r:\X\to\{\pm1\}$ such that, for every relabeled hypothesis $h_r(x):=-h(x)\,r(x)$, the set $h_r^{-1}(+1)$ is an initial segment under $\prec$. Thus, after relabeling, it suffices to analyze the class of initial segments under a tree order.

\begin{definition}[VC-$1$ score function for initial segments]\label{def:vc1-score} Let $\prec$ be a tree order on $\X$ and let $\C$ be the class of initial segments under $\prec$. For a version space $V$, let $P(V)$ be the set of points that are \emph{forced} to be $+1$ under $V$ as positive\footnote{$P(V):=\{x'\sep x'\in\X,\; V_{x'\to -1}=\emptyset\}$}. Define $f_{\mathrm{seg}}$ on singletons $U=\{(x,y)\}$ by
\[
f_{\mathrm{seg}}(\{(x,y)\};V)= \mathbf{1}\!\left[P(V)\subseteq I(x)\right]
\]
Where $I(x)$ is the initial segment of $x$ under $\preceq$.
\end{definition}
    
\begin{lemma}\label{lem: tree order}
For a domain $\X$ with a tree order $\prec$, let $\C$ be the class of initial segments under $\prec$. Let $f_{\mathrm{seg}}$ be the score function from \Cref{def:vc1-score}. Then $f_{\mathrm{seg}}$ is a \good[3][1][1] function and \Cref{alg:potential-learner} with $f_{\mathrm{seg}}$ and the right choice of $\alpha$ achieves combined error $\tilde{\mathcal{O}}\left(\sqrt{T}\right)$.
\end{lemma}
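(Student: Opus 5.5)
The plan is to verify the three conditions in the definition of a \good[3][1][1] function for $f_{\mathrm{seg}}$, and then invoke \Cref{thm:upper bound} with $k=1$, $c=1$, $m=3$. The resulting rate is $\tilde{\mathcal{O}}\bigl((3T)^{1/2}\bigr)=\tilde{\mathcal{O}}(\sqrt T)$, once $\alpha$ is chosen to balance the two bounds (roughly $\alpha\asymp\sqrt T$).

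Boundedness in $[0,1]$ is immediate, since $f_{\mathrm{seg}}$ is an indicator. For monotonicity, note that $V_{x'\to y'}\subseteq V$, so every point forced positive under $V$ stays forced positive under $V_{x'\to y'}$. Hence $P(V)\subseteq P(V_{x'\to y'})$, and the indicator $\mathbf 1[P(\cdot)\subseteq I(x)]$ can only go from $1$ to $0$. I will also use the structural fact that, for a realizable version space $V=\Vcal(A)$, $P(V)$ is the downward closure of the positive points of $A$. This holds because initial segments are downward closed and the positives of a consistent $h$ form a chain.

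Robustness is a case analysis on the three labeled points of $M$, all labeled by an initial segment $h$. Throughout, $A\supseteq M\setminus(U\cup\{(x,y)\})$ is an arbitrary $h$-labeled extension with $V=\Vcal(A)$.
\begin{itemize}
\item \emph{Two positives.} Suppose $M$ contains two positives. They lie in the chain $h^{-1}(+1)$, so they are comparable, say $p_1\prec p_2$. Take $U=\{(p_2,+1)\}$ and witness $(p_1,+1)$. Flipping $p_1$ to $-1$ forces every point above it to be $-1$, so $U$ becomes unrealizable in $V_{p_1\to-1}$.
\item \emph{Two comparable negatives.} If there are two negatives $n_1\prec n_2$, take $U=\{(n_1,-1)\}$ and witness $(n_2,-1)$. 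Flipping $n_2$ to $+1$ forces $I(n_2)\ni n_1$ to be positive, so again $U$ is unrealizable.
\item \emph{Two incomparable negatives.} The remaining case is two incomparable negatives $n_1,n_2$ together with a third point $z$. Take $U=\{(n_i,-1)\}$ and witness $(n_j,-1)$ for a suitable choice of $i\ne j$. After flipping, $n_j\in P(V_{n_j\to+1})$ but $n_j\notin I(n_i)$. So either $U$ is unrealizable, or $f$ drops to $0$ in the flipped version space. This produces a drop of exactly $1$ provided $f(U;V)=1$, i.e.\ provided all positives of $A$ lie below $n_i$.
\end{itemize}

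The main obstacle is exactly this last case. The extension $A$ may contain positives of $h$ that branch off the root-to-$n_i$ path, in which case $f(U;V)$ is already $0$ and neither alternative in the robustness condition holds. My first attempt will be to choose $i$ according to where the chain $h^{-1}(+1)$ leaves the paths to $n_1$ and $n_2$. Since $h$ is fixed before $A$ is chosen, and $A$ is $h$-labeled, every positive of $A$ lies in $h^{-1}(+1)$, which is a single root path. Consider the meet $w$ of $n_1$ and $n_2$. If the chain $h^{-1}(+1)$ continues above $w$ toward $n_i$, then it stays inside $I(n_i)$ up to where it stops, and choosing that $i$ works. If $h^{-1}(+1)$ stays at or below $w$, then every positive lies in $I(n_1)\cap I(n_2)$, and either choice works.

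The problematic subcase is when $h^{-1}(+1)$ branches away from both paths strictly below $w$, or off a third branch. There I intend to use the third point $z$. If $z$ is positive, it lies on the chain, and I will try pairing $U=\{(n_i,-1)\}$ or $U=\{(z,+1)\}$ with a witness chosen so that flipping it contradicts the chain through $z$. If $z$ is negative and comparable with $n_1$ or $n_2$, the second bullet applies. If $z$ is negative and pairwise incomparable with both, I will argue that the chain of $h$ cannot branch away from all three relevant paths in a way that simultaneously defeats every choice; if this fails I will fall back on redefining which point is the witness. Completing this subcase cleanly, using only three points, is the step I expect to require the most care.
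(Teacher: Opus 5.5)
\textbf{Gap in the incomparable-negatives case.} Your monotonicity argument and the first two robustness cases match the paper. The third case, two incomparable negatives $n_1,n_2$, is left unfinished, and the obstacle you name there is not real.

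You say that when $f_{\mathrm{seg}}(U;V)=0$ ``neither alternative in the robustness condition holds.'' This misses that the second alternative holds whenever $V_{n_j\to+1}=\emptyset$. Suppose $f_{\mathrm{seg}}(\{(n_i,-1)\};V)=0$, so some forced positive $x'\in P(V)$ lies outside $I(n_i)$. Since $h\in V$ we have $h(x')=+1$. Since $h(n_j)=-1$ and $h^{-1}(+1)$ is downward closed, $n_j\not\preceq x'$. So if also $x'\notin I(n_j)$, then $x'$ and $n_j$ are incomparable, no initial segment contains both, and $V_{n_j\to+1}=\emptyset$.

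This is exactly what happens in your ``problematic subcase,'' where $h^{-1}(+1)$ branches away from both paths. Every positive outside $I(n_i)$ is then also outside $I(n_j)$, so that subcase is trivial and the third point $z$ is never needed. Your other claim also fails: if the chain continues above $w$ toward $n_i$, it need not ``stay inside $I(n_i)$.'' It can leave the path to $n_i$ strictly between $w$ and $n_i$, since $h(n_i)=-1$. So $f_{\mathrm{seg}}(U;V)=0$ can occur there too, and it is again rescued only by $V_{n_j\to+1}=\emptyset$. In addition, the meet $w$ need not exist in a general tree order.

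\textbf{The missing idea.} This is how the paper closes the case: fix $i$ using $h$ alone. The sets $h^{-1}(+1)\cap I(n_1)$ and $h^{-1}(+1)\cap I(n_2)$ are downward-closed subsets of the chain $h^{-1}(+1)$, hence nested. Choose $i$ so that $h^{-1}(+1)\cap I(n_j)\subseteq I(n_i)$. The paper states this as the implication $P(\Vcal(A'))\subseteq I(n_j)\Rightarrow P(\Vcal(A'))\subseteq I(n_i)$ for all $h$-labeled $A'$, proved by contradiction from the nestedness of the sets $P(\Vcal(A'))$. Now take $U=\{(n_i,-1)\}$ with witness $(n_j,-1)$. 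For every extension $A$ there are two cases.
\begin{itemize}
\item If $f_{\mathrm{seg}}(U;V)=1$ and $V_{n_j\to+1}\neq\emptyset$, then $n_j\in P(V_{n_j\to+1})\setminus I(n_i)$, so the score drops by $1$.
\item If $f_{\mathrm{seg}}(U;V)=0$, any $x'\in P(V)\setminus I(n_i)$ lies in $h^{-1}(+1)\setminus I(n_j)$. It is therefore incomparable with $n_j$, which gives $V_{n_j\to+1}=\emptyset$.
\end{itemize}
The other choice of $i$ can genuinely fail, for example when $A$ contains a positive of $h$ in $I(n_j)\setminus I(n_i)$. So the choice of $i$, not the third point, is the crux of this case.
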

Due to the aforementioned characterization of any VC $1$ concept class as initial segments on a tree ordering, we obtain the more general result as a corollary of \Cref{lem: tree order}.
\begin{corollary}\label{lem: vc1}
    Let $\C$ be a VC $1$ concept class over domain $\X$. There exists a function $f_{\mathrm{vc1}}$ that is \good[3][1][1], and \Cref{alg:potential-learner} with $f_{\mathrm{vc1}}$ and the right choice of $\alpha$ achieves combined error $\tilde{\mathcal{O}}\left(\sqrt{T}\right)$.
\end{corollary}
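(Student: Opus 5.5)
We would derive the corollary by reducing to \Cref{lem: tree order} through the structural characterization of \citet{bendavid20152notesclassesvapnikchervonenkis}. Since $\VCdim(\C)\le 1$, fix a tree order $\prec$ on $\X$ and a reference labeling $r:\X\to\{\pm1\}$ such that for every $h\in\C$ the relabeled hypothesis $h_r(x)=-h(x)r(x)$ has $h_r^{-1}(+1)$ an initial segment. Let $\C_r:=\{h_r\sep h\in\C\}$. This is a subclass of the initial segments under $\prec$. Define the bijection $\phi:\Lcal\to\Lcal$ by $\phi(x,y)=(x,-y\,r(x))$. It maps samples labeled by $h$ to samples labeled by $h_r$ and preserves realizability. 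It extends to version spaces by $V\mapsto V_r:=\{h_r\sep h\in V\}$, and we have $\Vcal_{\C_r}(\phi(S))=(\Vcal_\C(S))_r$. Also $(V_{x\to y})_r=(V_r)_{x\to -yr(x)}$, so $\phi$ commutes with restriction. We then set
\[
f_{\mathrm{vc1}}(\{(x,y)\};V):=f_{\mathrm{seg}}\bigl(\{\phi(x,y)\};V_r\bigr),
\]
where $P(\cdot)$ inside $f_{\mathrm{seg}}$ is computed relative to $\C_r$.

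The key steps are as follows. First, the monotonicity and robustness conditions are stated entirely in terms of version spaces, restrictions $V_{x\to\pm y}$, realizability, and labeled sets. All of these are transported faithfully by $\phi$: flipping $y$ corresponds to flipping $-yr(x)$. Hence $f_{\mathrm{vc1}}$ is $(3,1,1)$-resilient for $\C$ exactly when $f_{\mathrm{seg}}$ is $(3,1,1)$-resilient for $\C_r$. Second, we check that \Cref{lem: tree order} still applies to the subclass $\C_r$, rather than only to the full class of initial segments. We would reread its proof to confirm that it uses only facts such as ``every hypothesis is an initial segment'' and the definition of $P(V)$ relative to the class in use. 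Third, we apply \Cref{thm:upper bound} with $k=1$, $c=1$, $m=3$. This gives combined error $\tilde{\mathcal O}(\sqrt{T})$ after optimizing $\alpha$. Equivalently, running \Cref{alg:potential-learner} with $f_{\mathrm{vc1}}$ on $\C$ is the same as running it with $f_{\mathrm{seg}}$ on the relabeled stream. This holds because $\rho_{f_{\mathrm{vc1}}}(S)=\rho_{f_{\mathrm{seg}}}(\phi(S))$, and the predicted labels correspond under $\phi$.

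The main obstacle is the second step. The robustness argument for $f_{\mathrm{seg}}$ might use hypotheses that are initial segments but lie outside $\C_r$, for example to certify that a point is unforced. If that happens, we would instead work with the full class $\C'$ of all initial segments. We would run the algorithm with version spaces computed in $\C'$ on the relabeled stream. Realizability under $\C_r\subseteq\C'$ implies realizability under $\C'$, and errors are only counted against the true labels. So the bound of \Cref{lem: tree order} for $\C'$ transfers directly, and $f_{\mathrm{vc1}}$ is defined by $V\mapsto\Vcal_{\C'}$ of the relabeled sample.
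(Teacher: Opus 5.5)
Your proposal is correct and matches the paper's proof. The paper likewise defines $f_{\mathrm{vc1}}(\{(x,y)\};V)=f_{\mathrm{seg}}(\{(x,y_r(x,y))\};V_r)$ with $y_r(x,y)=-y\,r(x)$, transfers monotonicity and robustness through the bijection $h\mapsto h_r$ (which commutes with restrictions), and applies \Cref{thm:upper bound} with $k=c=1$, $m=3$ (taking $\alpha=\sqrt{T}$). The paper silently treats $\C_r$ as ``the initial-segment class'', so the subclass issue you flag is real, but it resolves in your favor and the fallback to the full class of initial segments is unnecessary. The proof of \Cref{lem: tree order} only uses two facts, both of which hold verbatim for $\C_r$. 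First, every hypothesis's positive set is a downward-closed chain, so no hypothesis contains two incomparable points. Second, $P(V)$ is a downward-closed subset of the chain $h^{-1}(+1)$; Case 3, for instance, relies only on this.
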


\section{New Upper Bounds: Inference Dimension and Certificate Dimension}\label{sec:upper_bounds}
Inference dimension is a combinatorial parameter from active learning with comparison queries \citep{kane2017activeclassificationcomparisonqueries}. We use it as a sufficient condition for instantiating our potential framework.

\begin{definition}[Inference dimension {\citep{kane2017activeclassificationcomparisonqueries}}]
Let $\C$ be a concept class. Assume $\C$ admits a real-valued representation: for each $h\in\C$ there exists $f_h:\X\to\mathbb{R}$ such that $h(x)=\sign(f_h(x))$ for all $x$. Given a finite set $S=\{x_1,\dots,x_s\}\subseteq \X$ and $h\in\C$, consider the comparison-query transcript
\[
Q_h(S):=\Bigl(\bigl(\1[f_h(x_i)\ge 0]\bigr)_{i\in[s]},\; \bigl(\1[f_h(x_i)\ge f_h(x_j)]\bigr)_{i,j\in[s]}\Bigr).
\]
We say that $S\setminus\{x\}$ \emph{infers} $x$ under $h$ if for every $h'\in\C$ with $Q_{h'}(S\setminus\{x\})=Q_h(S\setminus\{x\})$, we have $h'(x)=h(x)$. The \emph{inference dimension} of $(\C,\X)$ is the smallest $k$ such that for every $S\subseteq \X,|S|=k$ and every $h\in\C$ there exists $x\in S$ such that $S\setminus\{x\}$ infers $x$ under $h$.
\end{definition}
\paragraph{From inference dimension to our framework.} For $|S|=k-1$, the transcript $Q_h(S)$ takes only finitely many values; in fact there are at most $2^{k-1}k!$ possibilities. Encoding the transcript as a finite-valued certificate yields a \good[k][\ensuremath{2^{k-1}k!}][k-1] score function.
\begin{lemma}\label{lem:inference dimension}
If $(\C,\X)$ has inference dimension $k$, then there exists a \good[k][\ensuremath{2^{k-1}k!}][k-1] function $f$ such that \Cref{alg:potential-learner} (with an appropriate choice of $\alpha$) achieves combined error bounded by $\tilde{\mathcal{O}}\left(kT^{1-\frac{1}{k}}\right)$.
\end{lemma}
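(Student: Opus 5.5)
The plan is to build the score function directly from comparison-query transcripts, check the two conditions of a \good[k][\ensuremath{2^{k-1}k!}][k-1] function, and then plug into \Cref{thm:upper bound} with its parameter $k$ replaced by $k-1$.

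\textbf{Construction.} For a labeled set $U\in\mathcal{P}_{k-1}(\Lcal)$, write $U_X$ for its set of points. For a version space $V$, define
\[
f(U;V):=\bigl|\{\,Q_h(U_X)\sep h\in V,\ h(x)=y\text{ for all }(x,y)\in U\,\}\bigr|,
\]
that is, the number of distinct transcripts on $U_X$ realized by hypotheses in $V$ that are consistent with $U$. Since $|U_X|=k-1$, there are at most $2^{k-1}k!$ transcripts: $2^{k-1}$ sign patterns times the number of weak orderings of $k-1$ values, which is bounded crudely by $k!$. This count is the one the paper quotes, and I would only sanity-check it. Hence $f$ takes values in $[0,2^{k-1}k!]$.

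\textbf{Monotonicity.} Passing from $V$ to $V_{x\to y}$ only removes hypotheses. Therefore the set of realized transcripts can only shrink, which gives $f(U;V)\ge f(U;V_{x\to y})$.

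\textbf{$k$-robustness.} Fix $h\in\C$ and a set $M$ of $k$ points labeled by $h$. By the definition of inference dimension there is an $x\in M_X$ such that $M_X\setminus\{x\}$ infers $x$ under $h$. Let $U:=M\setminus\{(x,y)\}$, so $|U|=k-1$, and take $(x,y)$ as the witness. Then $M\setminus(U\cup\{(x,y)\})=\emptyset$, so the condition must hold for every realizable $A$ labeled by $h$; put $V=\Vcal(A)$.
\begin{itemize}
\item Since $h\in V$ and $h$ is consistent with $U$, the transcript $Q_h(U_X)$ is counted in $f(U;V)$.
\item In $V_{x\to -y}$, any $h'$ with $Q_{h'}(U_X)=Q_h(U_X)$ satisfies $h'(x)=h(x)=y$ by inference, so $h'\notin V_{x\to-y}$. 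Thus $Q_h(U_X)$ is no longer counted.
\item Every transcript realized in $V_{x\to -y}$ is also realized in $V$.
\end{itemize}
Hence $f(U;V)-f(U;V_{x\to -y})\ge 1$. (If $U$ is unrealizable in $V_{x\to-y}$, the alternative clause of robustness applies anyway.) So $f$ is \good[k][\ensuremath{2^{k-1}k!}][k-1].

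\textbf{Rate.} Apply \Cref{thm:upper bound} with $k\leftarrow k-1$, $m=k$ and $c=2^{k-1}k!$. This gives
\[
\tilde{\mathcal{O}}\!\left(\frac{c^{1/k}}{k-1}(kT)^{1-1/k}\right).
\]
We have $c^{1/k}\le 2(k!)^{1/k}\le 2k$ and $k^{1-1/k}/(k-1)=O(1)$, so the bound is $\tilde{\mathcal{O}}(kT^{1-1/k})$. The case $k=1$ is degenerate: then every point is inferred from the empty set, so labels are determined and no error occurs.

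\textbf{Main obstacle.} The delicate step is robustness. I must make sure that $f$ depends only on $U$ and $V$ in a way that keeps $Q_h(U_X)$ "present" in $V$ for every extension $A$. This is exactly why $f$ is defined through transcripts of hypotheses in $V$, which always contains $h$, rather than through the learner's observations. I must also make sure that the inference property removes that transcript completely after the label of $x$ is flipped. The transcript-count bound is secondary; any finite bound only affects constants.
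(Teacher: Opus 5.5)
Your construction and verification match the paper's proof, which goes through certificate dimension. The paper sets $\sigma(h,S)$ to be an index of the transcript $Q_h(N(S))$ and checks certificate dimension $k$ with $m=k$. It then takes $f$ to be the number of certificate values realized in $V$, which is your $f$. Your extra restriction to hypotheses consistent with $U$ is harmless, since the sign bits of the transcript already encode the labels of $U$. Monotonicity, robustness and the rate computation are correct. Your handling of the shift $k\leftarrow k-1$ in \Cref{thm:upper bound} and of the degenerate case $k=1$ is more careful than the paper's.

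The one incorrect step is the transcript count. The number of weak orderings of $k-1$ values is not at most $k!$ in general. For $k=8$ there are $47293$ weak orderings of $7$ items, while $8!=40320$. So your chain ``$2^{k-1}$ sign patterns times weak orderings of $k-1$ values $\le 2^{k-1}k!$'' does not justify the range $[0,2^{k-1}k!]$ claimed in the statement once $k\ge 8$.

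The fix is the paper's count. The whole transcript $Q_h(U_X)$, sign bits included, is determined by the weak order of the $k$ reals $0, f_h(x_1),\dots,f_h(x_{k-1})$. Every weak order of $k$ items arises from one of $k!$ permutations together with a choice of strict or equal in each of the $k-1$ adjacent gaps. This gives at most $2^{k-1}k!$ transcripts. As you anticipated, this affects only constants, not the $\tilde{\mathcal{O}}(kT^{1-1/k})$ rate, but the lemma states this specific value of $c$.
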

For example, halfspaces in $\mathbb{R}^2$ have inference dimension $5$ \citep{kane2017activeclassificationcomparisonqueries}, yielding a $\tilde{\mathcal{O}}(T^{4/5})$ bound via \Cref{lem:inference dimension}.
We can also use inference dimension to show that halfspaces with finite bit precision, or with a margin, are learnable.

\paragraph{Certificate Dimension.} Inference dimension is tailored to comparison queries and real-valued representations. Our framework only needs a finite-valued notion of \emph{certificates}, which motivates the following relaxation.

\begin{definition}[Certificate dimension]
We say that $(\C,\X)$ has \emph{certificate dimension $k$ with parameters $(m,n)$} if there exist integers $m,n\ge 1$ and a function $\sigma:\C\times\mathcal{P}_{k-1}(\mathcal{L})\to\{0,1,\dots,n-1\}$ such that the following holds. For every $h\in\C$ and every realizable $S\subseteq \Lcal, |S|=m$ labeled by $h$, there exist $U\subseteq S,|U|=k$ and $(x,y)\in U$ such that for every $h'\in \Vcal(S\setminus\{(x,y)\})$,
\[
\sigma\left(h',U\setminus\{(x,y)\}\right) =\sigma\left(h,U\setminus\{(x,y)\}\right) \Longrightarrow h'(x)=h(x)=y.
\]
\end{definition}

Compared to inference dimension, certificate dimension introduces a tradeoff between the \emph{certificate alphabet} size $n$ and the \emph{window} size $m$: we only require that \emph{some} $k$-tuple inside any realizable set of size $m$ contains a certifying point. Allowing $m>k$ is what lets us use smaller certificates rather than encoding a full comparison-query transcript on a size-$(k-1)$ set.
 \begin{lemma}\label{lem:certificate dimension}
 If $(\C,\X)$ has certificate dimension $k$ with parameters $(m,n)$, then there exists a \good[m][n][k-1] function $f$ such that \Cref{alg:potential-learner} with an appropriate choice of $\alpha$ achieves
\[
\E[\errmis+\errabs] \leq \tilde{\mathcal{O}}\left( \frac{1}{k}n^{\frac{1}{k}} \left(mT\right)^{1-\frac{1}{k}}\right).
\]
\end{lemma}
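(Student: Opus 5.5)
We construct the score function directly from the certificate map $\sigma$ and then invoke \Cref{thm:upper bound} with parameters $(m,n,k-1)$. For $W\in\mathcal{P}_{k-1}(\Lcal)$ and a version space $V$ consistent with $W$, define
\[
f(W;V):=\bigl|\{\sigma(h',W)\sep h'\in V\}\bigr|-1,
\]
the number of distinct certificate values still attainable on $W$ by hypotheses in $V$, minus one. If $V$ is inconsistent with $W$ we set $f(W;V):=0$. Then $f$ takes values in $[0,n-1]\subseteq[0,n]$, so boundedness holds with $c=n$. Monotonicity is immediate: $V_{x\to y}\subseteq V$, so the set of attainable certificate values can only shrink. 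The one subtlety is the convention when $W$ becomes unrealizable, which gives value $0$ and so is still a decrease.

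The main step is $m$-robustness. Fix $h$ and a realizable set $M$ of size $m$ labeled by $h$. By certificate dimension there are $U\subseteq M$ with $|U|=k$ and a point $(x,y)\in U$ with the certifying property. We take the witness $(k-1)$-set to be $W:=U\setminus\{(x,y)\}$ and the witness point to be $(x,y)$. Now let $A\supseteq M\setminus(W\cup\{(x,y)\})$ be any realizable extension labeled by $h$, and set $V=\Vcal(A)$.

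Since $h\in V$ and $h$ is consistent with $W$, the value $\sigma(h,W)$ is attained in $V$ restricted to $W$'s labels. This is where the definition of the potential matters: the version space in the leave-$k$-out term excludes $W$ itself. So we should read the relevant set as $V\cap\Vcal(W)=\Vcal(A\cup W)\subseteq\Vcal(M\setminus\{(x,y)\})$. To make this work I would redefine $f(W;V)$ using $V\cap\Vcal(W)$, counting only hypotheses consistent with the labels in $W$; monotonicity survives this change.

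Every $h'\in V_{x\to -y}\cap\Vcal(W)$ lies in $\Vcal(S\setminus\{(x,y)\})$ with $S=M$. It has $h'(x)=-y$, so by the certificate property $\sigma(h',W)\neq\sigma(h,W)$. Hence the value $\sigma(h,W)$ is attainable in $V$ but not in $V_{x\to -y}$. If $V_{x\to-y}\cap\Vcal(W)$ is empty, $W$ is unrealizable there, which is the second alternative of robustness. Otherwise $f$ drops by at least $1$.

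The obstacle is exactly this bookkeeping: the certificate definition quantifies over $\Vcal(S\setminus\{(x,y)\})$, which contains $W$'s labels but must also be contained in the version space we use. This is why the extension $A$ together with $W$ must include $M\setminus\{(x,y)\}$, and the definition of robustness guarantees precisely that.

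Finally, we apply \Cref{thm:upper bound} with $k-1$ in place of $k$, $c=n$, and window $m$. This gives combined error $\tilde{\mathcal{O}}\bigl(\tfrac{1}{k-1}n^{1/k}(mT)^{1-1/k}\bigr)$, which matches the claim up to constants since $\tfrac{1}{k-1}=\Theta(\tfrac1k)$ for $k\ge2$. The case $k=1$ is degenerate: there the certificate is trivial, and the error bound follows directly from the same argument with constant potential terms.
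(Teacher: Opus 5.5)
Your proof is correct and follows the paper's construction. In both, $f$ counts the certificate values still attainable on the $(k-1)$-set, monotonicity comes from $V_{x\to y}\subseteq V$, robustness comes from the certificate property, and the rate comes from \Cref{thm:upper bound} with $k-1$ in place of $k$ and $c=n$.

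Your restriction to $V\cap\Vcal(W)$ is a needed correction, not a detour. The paper counts values over all of $V$ and only checks extensions containing the rest of the window. Robustness, however, quantifies over extensions $A$ that may omit $W$, which is exactly how it is used with $\Vcal(A\setminus U)$ in \Cref{lem:bound attackable samples}. Intersecting with $\Vcal(W)$ is what places every competing $h'$ inside $\Vcal(M\setminus\{(x,y)\})$, so that the certificate property applies.
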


\paragraph{Proof sketch.} We take $f(U;V)$ to be the number of distinct certificate values $\sigma(h,U)$ realizable by hypotheses $h\in V$.
This is bounded and monotone, and the certificate-dimension condition ensures that in every realizable set of size $m$ there is a point whose mislabeling eliminates the true certificate value, yielding robustness.

\paragraph{Halfspaces in $\mathbb{R}^2$.} While inference dimension already implies learnability (giving a $\tilde{\mathcal{O}}(T^{4/5})$ bound via \Cref{lem:inference dimension}), certificate dimension yields a sharper exponent\footnote{For technical reasons, we bound the certificate dimension of halfspaces in $\mathbb{R}^2$ only when the learner starts out with a sample of each label. This can be easily achieved by the learner with no abstentions and only two mistakes. See the proof in \Cref{sec:proof of upper bounds} for more details.} The proof relies on the combined error having a good dependence on the window size, and the certificate being a function of just two points.
\begin{theorem}\label{thm:halfplanes}
    Let $\X=\mathbb{R}^2$ and $\C=\{h(x)=\sign\left(w^\top x + b\right)\sep w\in \mathbb{R}^2, b\in \mathbb{R}\}$ be the class of two dimensional halfspaces. $\C$ can be learned in the adversarial injections model, with
    \begin{align*}
        \mathbb{E}[\errmis+\errabs] \leq \tilde{\mathcal{O}}\left(T^{2/3}\right)
    \end{align*}
\end{theorem}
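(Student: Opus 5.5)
The plan is to reduce \Cref{thm:halfplanes} to \Cref{lem:certificate dimension} by showing that halfspaces in $\mathbb{R}^2$ have certificate dimension $k=3$ with constant parameters $(m,n)$. Plugging $k=3$ into \Cref{lem:certificate dimension} then gives $\tilde{\mathcal{O}}\bigl(n^{1/3}(mT)^{2/3}\bigr)=\tilde{\mathcal{O}}(T^{2/3})$.

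\emph{Initial phase.} As in the footnote, the learner first predicts a fixed label until it has seen one example of each label, and then flips. This costs at most two mistakes and no abstentions. Afterwards we run \Cref{alg:potential-learner} on the class restricted to hypotheses consistent with these two anchor points $p_+,p_-$. The anchors can be added to every window $S$, so from now on every $S$ we consider contains both labels. The restriction is needed to rule out degenerate all-$+1$ or all-$-1$ version spaces, where nothing can be certified.

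\emph{Certificate.} For a pair $\{(p,y_p),(q,y_q)\}$ and a halfspace $h=\sign(w^\top x+b)$, let $\sigma(h,\cdot)$ record two things. The first is the comparison bit $\1[w^\top p+b\ge w^\top q+b]$. The second is where the boundary line of $h$ meets the line $pq$: before $p$, between $p$ and $q$, after $q$, or parallel. This gives $n\le 8$ values.

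\emph{Combinatorial step.} The step I would do next is to show that some constant $m$ works, arguing by cases.
\begin{itemize}
\item If some point $x\in S$ lies in the convex hull of the other points of $S$ with its label, then every $h'\in\Vcal(S\setminus\{(x,y)\})$ already labels $x$ correctly. Any triple containing $x$ then works, with the condition on $\sigma$ vacuous.
\item Otherwise the positives of $S$ are in convex position, and so are the negatives. The two hulls are separated by the boundary of $h$.
\item Take the chain of positive hull vertices facing the negative hull; it is ordered along the boundary direction. For three consecutive chain points $a,x,b$, the point $x$ lies outside the hull of $\{a,b\}$ on the side facing the boundary.
\item Knowing that the boundary of $h'$ does not cross segment $ab$, together with which of $a,b$ is closer to it, should force the boundary of $h'$ into a region that keeps $x$ on the positive side. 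This uses that $h'$ also separates the remaining negatives, in particular $p_-$ and the nearby negative chain.
\end{itemize}
With $m$ a moderate constant, pigeonhole guarantees either a hull-interior point or a long enough facing chain on one side. A long chain yields a certified triple $U=\{a,x,b\}$ with certifying point $x$.

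\emph{Main obstacle.} I expect the geometric verification to be the hardest part. The issue is that a boundary line of $h'$ not crossing $ab$ could still cut between $x$ and segment $ab$ by tilting. That tilting must be excluded using the comparison bit and the constraints coming from the negatives in $S\setminus\{x\}$. If the two-point certificate does not suffice, my first fix is to enlarge $\sigma$ slightly: for instance, also record the side of the line $pq$ on which the negative halfplane of $h$ lies, and choose $x$ among the middle points of a longer chain, using an Erd\H{o}s--Szekeres-type selection so that the slopes behave monotonically. Both changes keep $n$ and $m$ constant and $k=3$, so the final rate is unchanged.
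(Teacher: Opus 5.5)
Your outer reduction (an anchor phase costing at most two mistakes, then certificate dimension $3$ with constant $(m,n)$ fed into \Cref{lem:certificate dimension}) is the paper's. However, the certificate-dimension bound is the whole content of the theorem, and the mechanism you propose for it fails. On a pair $p,q$ that is positive under both $h$ and $h'$, your $\sigma$ depends only on the normal $w$: where the boundary meets line $pq$ is then just the three-way comparison of $w^\top p$ and $w^\top q$. Now translate the boundary of $h$ toward the positives until it passes just beyond $x$. If $x$ is the strictly nearest positive point of $S$ to the boundary, the translate $h'$ has the same $w$, still labels every point of $S$ other than $x$ correctly, and labels $x$ negative. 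So your triple fails whenever its middle vertex is that nearest point. Concretely, write $z=(z_1,z_2)$ and let $h$ be positive iff $z_2\ge 0$. Take positives $(-3,5)$, $a=(-1,2)$, $x=(0,1)$, $b=(1,2)$, $(3,5)$, and let every other point of $S$ (anchors included) have $z_2\ge 2$ if positive and $z_2<0$ if negative. Let $h'$ be positive iff $z_2\ge 3/2$. Then $h'\in\Vcal(S\setminus\{(x,+1)\})$, $\sigma(h',\{a,b\})=\sigma(h,\{a,b\})$, and $h'(x)=-1$. This is the translation issue you flagged, and your repairs do not touch it. Any record depending only on $w$, such as the side of line $pq$ facing the negative halfplane, is unchanged by the translation. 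An Erd\H{o}s--Szekeres selection gives no rule excluding the nearest point. What has to change is which point is certified, not the alphabet.

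The missing idea, used in the paper, is to put that nearest point inside the certificate and certify a farther point, using the negative anchor as a lever. Among $m=7$ points, four share a label, say $+1$. If one lies in the hull of the other three, it is forced by convexity. Otherwise let $x_0$ minimize $w_h^\top x_i$, and let $x_0x_1$, $x_2x_3$ be the crossing diagonals, with $x_2$ on the same side of line $x_0x_1$ as $p_-$. Radon's theorem on $\{x_0,x_1,x_2,p_-\}$ leaves only three possibilities: the configuration is impossible, or some $x_i$ is forced by convexity (segment $p_-x_i$ crossing a positive segment), or $x_2\in\mathrm{conv}\{x_0,x_1,p_-\}$. In the last case, any $h'$ with $h'(x_0)=+1$, $h'(p_-)=-1$ and $h'(x_1)=-1$ satisfies $w_{h'}^\top x_2<w_{h'}^\top x_0$ by linearity, whereas $w_h^\top x_0\le w_h^\top x_2$. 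Hence the three-valued comparison on $\{x_0,x_2\}$ certifies $x_1$, giving $k=3$, $m=7$, $n=3$. Your $\sigma$ refines this comparison, so it would also work once the triple is chosen this way.
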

The certificate used for half-spaces is $$\sigma(h,\{(x_1,y_1),(x_2,y_2)\})=\begin{cases}
    0 & w_h^\top x_1> w_h^\top x_2\\
    1 & w_h^\top x_1= w_h^\top x_2\\
    2 & w_h^\top x_1< w_h^\top x_2
\end{cases}$$ where $x_1,x_2$ are sorted lexicographically by their coordinates, and $w_h$ is defined so that $h(x)=\sign\left(w_h^\top x - b\right)$ for some $b\in \mathbb{R}$.

\begin{remark}
    The certificate dimension of axis aligned rectangles is $2$.
\end{remark}
\section{Discussion}
This paper extends our understanding of sequential prediction with abstention in the adversarial injection model. We proved a sharp separation between the known-$\D$ and unknown-$\D$ regimes: even for a VC-$1$ class, any learner without access to $\D$ must incur $\Omega(\sqrt{T})$ expected combined error, whereas oracle access to $\D$ enables polylogarithmic rates. On the algorithmic side, we introduced a potential-based framework that unifies the distribution-agnostic upper bounds from prior work and yields new ones, including learnability for every class with finite inference dimension and the first bound for halfspaces in $\mathbb{R}^2$. Despite this progress, there are many unresolved problems.

\paragraph{Stronger lower bounds via stronger adversaries.}
Our separation lower bound uses an oblivious adversary.
An interesting direction is to understand whether adaptive injection strategies can force larger error for other natural classes (beyond those where $\tilde{\mathcal{O}}(\sqrt{T})$ is already known to be tight), or whether the oblivious construction is essentially worst-case for the combined objective.

\paragraph{Upper bounds beyond exchangeability-based analyses.}
The abstention bounds in both this work and \citet{goel2024adversarialresiliencesequentialprediction} ultimately rely on exchangeability of the i.i.d.\ samples.
This style of argument is robust to very strong injection strategies (including ones that are tailored to individual i.i.d.\ points).
It is plausible that for some classes, restricting the adversary (or strengthening the learner's information) could yield faster rates, but doing so likely requires new techniques for controlling charged abstentions beyond the current exchangeability-based approach.

\paragraph{Higher-dimensional halfspaces.}
While we show halfspaces in $\mathbb{R}^2$ are learnable with sublinear combined error, the status of halfspaces in $\mathbb{R}^d$ for $d\ge 3$ remains open.
In particular, inference dimension is infinite in higher dimensions, but it is unclear whether this reflects a real barrier in the adversarial injection model with abstention, or merely a limitation of inference-dimension-based instantiations. Even a rate of $\mathcal{O}(T^{1 - 1/(d+1)})$ would be interesting.

\paragraph{A characterizing combinatorial dimension.}
A central open problem is to find a combinatorial parameter that characterizes learnability (and rates) in this model, analogous to VC dimension in the stochastic setting and Littlestone dimension in the fully adversarial setting. Our results suggest that the landscape is more delicate: VC dimension alone does not determine whether stochastic-like rates are achievable without access to $\D$, and new dimensions (such as inference or certificate dimension) capture additional structure.

\paragraph{Intermediate access models for $\D$.}
Our separation highlights the role of distribution access, but there are many intermediate regimes between full oracle access and fully hidden $\D$. For example, if the learner only has sample access to $\D$, one can trade off the number of auxiliary samples against abstention and misclassification error. Understanding the sample complexity needed to approximate the known-$\D$ guarantees of \citet{goel2024adversarialresiliencesequentialprediction} (and whether polynomially many samples suffice for broad VC classes) is an appealing direction.

\section*{Acknowledgments}
The authors gratefully acknowledge support from NSF Award CCF:AF:2504016 and a Schmidt Sciences AI2050 Early Career Fellowship.
\bibliographystyle{plainnat}
\bibliography{bib}

\newpage
\appendix
\section{Proof of \Cref{thm:vc1-lb}}\label{sec:proof of vc1}
\begin{proof}
Consider a rooted tree of countably infinite depth, where each node has countably infinite children\footnote{We have the tree be infinitely large so that the same $\C,\X$ can be used regardless of $T$.}. Let $\X$ be the nodes of the tree. For each node $x\in\X$, define $f_x:\mathcal{X}\to \{-1,+1\}$ by $f_x(v)=+1$ iff $v$ is a prefix of $x$, and $-1$ otherwise\footnote{That is, $f_x(v)=+1$ if and only if $v$ is on the path from the root to $x$.}. Let $\C:=\{f_x\sep x\in \mathcal{X}\}$. We claim that $\VCdim(\C)=1$.

First, $\VCdim(\C)\ge 1$ since for any non-root node $v$ there exist leaves $\theta$ that extend $v$ (so $f_\theta(v)=+1$) and leaves $\theta'$ that diverge before reaching $v$ (so $f_{\theta'}(v)=-1$).
On the other hand, no pair of samples can be shattered: for any two distinct nodes $u,v$, either they are incomparable (in which case $f_\theta(u)=f_\theta(v)=+1$ is impossible, since all samples labeled $+1$ are part of a chain, and are comparable to each other) or one is an ancestor of the other (in which case the labeling the ancestor $-1$ and the descendant $+1$ is impossible).
Thus $\VCdim(\C)\le 1$.

Let $B:=\lfloor \sqrt{T}\rfloor$ and set $T_0:=B^2\le T$.
We analyze the first $T_0$ rounds; if $T>T_0$, the adversary can set the remaining rounds to be injected duplicates, which cannot decrease the learner's expected total error, so the lower bound extends to horizon~$T$.

Then, choose a subtree $\mathcal{T}$ of depth $B+1$ rooted at the root of the tree, where each non-leaf node in the subtree has exactly $B$ children in $\mathcal{T}$. Then, we can order each non-leaf node's children $1$ through $B$.

Partition the first $T_0$ rounds into $B$ consecutive blocks of length $B$, indexed by $i\in[B]$. 
Define $\theta_0$ as the root, and choose secret $\theta_i\sim \mathrm{Unif}([B])$ for $i\in [B]$, and index $r\sim\mathrm{Unif}([B])$.

$\theta$ describes a (uniformly random) path from the root to a leaf of $\mathcal{T}$, where $\theta_i$ is the $i^{th}$ node along this path. 

For each block $i$, let $D_i$ be the uniform distribution over the $B$ children of $\theta_{i-1}$ in $\mathcal{T}$.
Under $f_\theta$, exactly one support sample of $D_i$ has label $+1$ and the remaining $B-1$ have label $-1$.
In block $i$, present $B$ independent draws from $D_i$.
Only block $r$ will be i.i.d.\ ($q_t=0$ there) and all other blocks will be injected ($q_t=1$ there).
We set the \emph{global} i.i.d.\ distribution to be $\D:=D_r$.
Since blocks $i\neq r$ are injected, the adversary is free to present samples distributed as $D_i$ there; the i.i.d.\ distribution used on clean rounds is fixed as $\D:=D_r$ throughout.

For each block $i$, let $M_i$ and $A_i$ denote the numbers of mistakes and abstentions made in that block.
Over the first $T_0$ rounds we have $err_{\mathrm{mis}}=\sum_{i=1}^B M_i$ and $err_{\mathrm{abs}}=A_r$.
Moreover, conditioned on $\theta$, the labeled transcript in each block is i.i.d.\ from $D_i$ with labels given by $f_\theta$, regardless of $r$.
Thus $r$ is independent of the learner's transcript and
\[
\E[err_{\mathrm{abs}}]=\E[A_r]=\frac{1}{B}\sum_{i=1}^B \E[A_i].
\]
Therefore
\begin{equation}\label{eq:block-decomp}
\E[err_{\mathrm{mis}}]+\E[err_{\mathrm{abs}}]
=\sum_{i=1}^B \E\!\left[M_i+\frac{A_i}{B}\right].
\end{equation}
Since the adversary strategy above is oblivious (it presents samples independently of the learner's actions), by Yao's minimax principle it suffices to analyze deterministic learners.

Fix a block $i$ and focus on the learner's behavior during its $B$ rounds.

Let $m$ be the number of \emph{distinct} support samples among the $B$ draws in this block, ordered by first appearance as $z_1,\dots,z_m$.
Let $J$ be the index of the first appearance of $\theta_i$, the unique $+1$ support sample among $\{z_1,\dots,z_m\}$ (or $J=m+1$ if it never appears).

For each $j\in[m]$, let $t_j$ be the first time within the block that $z_j$ appears.
On the event $\{J\ge j\}$, the learner has not yet seen a $+1$ label by time~$t_j$.
Conditioned on $m$, symmetry implies $\Prb(J=j\mid m)=\frac{1}{B}$ for $j\in[m]$ and $\Prb(J=m+1\mid m)=1-\frac{m}{B}$, hence
\[
    \Prb(J\ge j\mid m)=1-\frac{j-1}{B}=\frac{B-j+1}{B}.
\]
Therefore, for $j\in[m]$,
\[
    \Prb\!\bigl[f_\theta(z_j)=+1 \,\bigm|\, J\ge j,\, m\bigr]
    =\frac{\Prb(J=j\mid m)}{\Prb(J\ge j\mid m)}
    =\frac{1}{B-j+1}.
\]
In particular, for $j\le m-1$ we have $j\le B-1$ and hence $\Prb[f_\theta(z_j)=+1\mid J\ge j,m]\in[1/B,1/2]$.
Thus on round $t_j$ (for $j\le m-1$), regardless of the learner's action, the conditional expected contribution to $M_i+A_i/B$ is at least $1/B$:
abstaining contributes $1/B$, and any prediction has mistake probability at least $1/B$.
On the final new sample $j=m$, the same bound holds unless $m=B$ and $J\ge m$, in which case $\Prb[f_\theta(z_m)=+1\mid J\ge m,m]=1$ and a deterministic learner could incur zero loss.
This exceptional case is exactly $\{J=m\}$ when $m=B$; it occurs with conditional probability $1/B$ and can reduce the per-round loss by at most $1/B$, hence the $1/B^2$ correction.
We account for this by subtracting an additive $1/B^2$ below.

Therefore,
\[
\E\!\left[M_i+\frac{A_i}{B}\,\middle|\, m\right]
\ge
\sum_{j=1}^{m} \Prb(J\ge j\mid m)\cdot \frac{1}{B}\;-\;\frac{1}{B^2}.
\]
Since $m\le B$,
\[
\E\!\left[M_i+\frac{A_i}{B}\,\middle|\, m\right]
\ge
\sum_{j=1}^{m} \frac{B-j+1}{B^2}\;-\;\frac{1}{B^2}
\ge
\sum_{j=1}^{m} \frac{m-j+1}{B^2}\;-\;\frac{1}{B^2}
=
\frac{m(m+1)}{2B^2}\;-\;\frac{1}{B^2}.
\]
Taking expectations over $m$ and using Jensen's inequality (since $m\mapsto m(m+1)$ is convex) gives
\[
\E\!\left[M_i+\frac{A_i}{B}\right]
\ge
\frac{\E[m(m+1)]}{2B^2}\;-\;\frac{1}{B^2}
\ge
\frac{\E[m]^2}{2B^2}\;-\;\frac{1}{B^2}.
\]
For $B$ draws from a uniform distribution on $B$ samples,
\[
\E[m]=B\Bigl(1-(1-\tfrac{1}{B})^B\Bigr)\ge B(1-e^{-1}),
\]
and hence $\E[M_i+A_i/B]\ge (1-e^{-1})^2/2 - 1/B^2$ uniformly over~$i$.
Plugging into \eqref{eq:block-decomp} and summing over $B$ blocks yields
$\E[err_{\mathrm{mis}}+err_{\mathrm{abs}}]=\Omega(B)=\Omega(\sqrt{T})$.
\end{proof}
\section{Proof of \Cref{thm:upper bound}}\label{sec:proof of framework}
The proof will rely on two lemmas, one bounding the misclassification error, and the other bounding the abstention error.

We will start with misclassification error. 
\begin{lemma}\label{lem:misclassification}
    Fix an \good function $f$ and run \Cref{alg:potential-learner} with threshold $\alpha>0$ for $T$ rounds in the injection model.
    \begin{align*}
        \errmis \leq \frac{c}{k!\alpha}T^k
    \end{align*}
\end{lemma}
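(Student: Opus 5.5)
The plan is a potential argument: every misclassification lowers $\rho_f$ by at least $\alpha$, each round raises it by a bounded amount, and $\rho_f\ge 0$ since $f$ takes values in $[0,c]$. Write $z_t=(x_t,y_t)$ and $S_t=S_{t-1}\cup\{z_t\}$. If $z_t$ is already in $S_{t-1}$, then $S_t=S_{t-1}$ and the round adds nothing. In that case the version space already determines $y_t$, so the learner falls into one of the first two branches and predicts correctly. We may therefore assume $z_t$ is new.

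\textbf{Step 1 (per-round increase).} I will split $\rho_f(S_t)$ according to whether the $k$-subset $U$ contains $z_t$.

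For $U\subseteq S_{t-1}$, we have $\Vcal(S_t\setminus U)=\Vcal(S_{t-1}\setminus U)_{x_t\to y_t}$. Monotonicity applies here: $V=\Vcal(S_{t-1}\setminus U)$ is consistent with $U$, and $(x_t,y_t)$ is realizable together with $U$ because $c^\star$ lies in the restricted space. So each such term is at most its value in $\rho_f(S_{t-1})$.

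The subsets containing $z_t$ number $\binom{t-1}{k-1}$, and each contributes at most $c$. Hence
\[
\rho_f(S_t)-\rho_f(S_{t-1})\le c\binom{t-1}{k-1}.
\]

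\textbf{Step 2 (drop on mistakes).} A mistake can only occur in the third branch. The first two branches predict the only realizable label, and under realizability that label is correct. In the third branch, the learner predicts $-b^\star$ with $\Delta_{b^\star}\ge\alpha$, so a mistake means $y_t=b^\star$. Then $\rho_f(S_{t-1})-\rho_f(S_t)=\Delta_{b^\star}\ge\alpha$.

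\textbf{Step 3 (telescoping).} Starting from $\rho_f(S_0)=0$ and using $\rho_f(S_T)\ge 0$, summing the increments gives
\[
0\le\rho_f(S_T)=\sum_{t=1}^T\bigl(\rho_f(S_t)-\rho_f(S_{t-1})\bigr)\le -\alpha\,\errmis+c\sum_{t=1}^T\binom{t-1}{k-1}.
\]
The one subtle point is that Step 1's bound must be applied only on non-mistake rounds. On mistake rounds we use the $-\alpha$ bound instead of adding the two. This is legitimate because both are upper bounds on the same increment, and the increment bound is nonnegative, so dropping it on mistake rounds only weakens the right-hand side. By the hockey-stick identity, $\sum_{t=1}^T\binom{t-1}{k-1}=\binom{T}{k}\le T^k/k!$. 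This gives $\errmis\le\frac{c}{k!\alpha}T^k$.

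The only delicate part is checking that monotonicity's hypotheses hold in Step 1. Everything else is bookkeeping.
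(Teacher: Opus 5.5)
Your proof is correct and follows the paper's argument essentially step for step. Both split the $k$-subsets by whether they contain $z_t$ to get the per-round increase bound $c\binom{t-1}{k-1}$, use the certified drop of at least $\alpha$ on mistake rounds, and telescope with $\rho_f\ge 0$ and the hockey-stick identity. Your extra checks are ones the paper leaves implicit, and they do not change the route: the duplicate case, verifying monotonicity's realizability hypotheses via $c^\star$, and noting that the first two branches never err.
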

Interestingly, the proof bounds the number of misclassifications, not just the expected number, and
doesn't rely on any properties of $\C$.

\paragraph{Intuition:}When the learner misclassifies $x_t$, the potential, $\rho_f(S_t)$ drops by $\alpha$ compared with $\rho_f(S_{t-1})$. Then, using the properties of $f$, we can show that $\rho_f(S_t)$ can only drop so much over the course of $T$ steps. Combining these yields the desired bound.
\begin{proof}
    
    Define the increment $\Gamma_t := \rho_f(S_t)-\rho_f(S_{t-1})$. We first upper bound $\Gamma_t$ for an arbitrary round $t$.
    Let $z_t:=(x_t,y_t)$ be the new labeled example, so $S_t = S_{t-1}\cup\{z_t\}$.
    Decompose
    \begin{align*}
    \rho_f(S_t)
    &= \sum_{\substack{U\subseteq S_t\\|U|=k}} f\!\left(U;\Vcal(S_t\setminus U)\right) \\
    &= \sum_{\substack{U\subseteq S_t\\|U|=k\\ z_t\notin U}} f\!\left(U;\Vcal(S_t\setminus U)\right)
    \;+\; \sum_{\substack{U\subseteq S_t\\|U|=k\\ z_t\in U}} f\!\left(U;\Vcal(S_t\setminus U)\right).
    \end{align*}
    For $U$ with $z_t\notin U$, we have $U\subseteq S_{t-1}$ and
    $S_t\setminus U = (S_{t-1}\setminus U)\cup\{z_t\}$, hence
    $\Vcal(S_t\setminus U)\subseteq \Vcal(S_{t-1}\setminus U)$ and by monotonicity of $f$,
    \[
    f\!\left(U;\Vcal(S_t\setminus U)\right)\le f\!\left(U;\Vcal(S_{t-1}\setminus U)\right).
    \]
    Thus these terms contribute at most $\rho_f(S_{t-1})$ in total.
    For the remaining terms (where $z_t\in U$), we use boundedness $f(\cdot)\le c$ and count the number of $k$-subsets containing $z_t$,
    which is $\binom{t-1}{k-1}$.
    Therefore,
    \[
    \rho_f(S_t)\le \rho_f(S_{t-1}) + c\binom{t-1}{k-1},
    \qquad\text{i.e.,}\qquad
    \Gamma_t \le c\binom{t-1}{k-1}.
    \]
    
    Now sum the increments, because $\rho_f(S_0)=0$:
    \[
    \rho_f(S_T) = \sum_{t=1}^T \Gamma_t.
    \]
    Split into mistake rounds $M:=\{t:\hat y_t=-y_t\}$ and non-mistake rounds:
    on $t\in M$ we have $\Gamma_t\le -\alpha$, while for all $t$ we have $\Gamma_t\le c\binom{t-1}{k-1}$. Notice that the learner only makes mistakes when $-\Gamma_t=\Delta_{y_t}\geq \alpha$, using this fact, and $\rho_f(S_T)\ge 0$,
    \begin{align*}
    0 \le \rho_f(S_T)
    &= \sum_{t\in M}\Gamma_t + \sum_{t\notin M}\Gamma_t \\
    &\le (-\alpha)\,|M| + \sum_{t\notin M} c\binom{t-1}{k-1} \\
    &\le (-\alpha)\,\errmis + c\sum_{t=1}^T \binom{t-1}{k-1}
    = (-\alpha)\,\errmis + c\binom{T}{k}.
    \end{align*}
    Rearranging gives
    \[
    \errmis \le \frac{c}{\alpha}\binom{T}{k} \le \frac{c}{\alpha k!}\,T^k.
    \]

\end{proof}
To prove \Cref{lem:abstention}, we use a similar style of argument as used in \cite{goel2024adversarialresiliencesequentialprediction}, showing that for any large enough set of samples, the learner will never abstain on most of the samples if given the rest as history, no matter what other samples are injected by the adversary. The slight relaxation we make is to limit the number of injected samples such that the total number of samples is at most $T$.

To begin with, we define \emph{attackable samples} very similarly to \cite{goel2024adversarialresiliencesequentialprediction}:
\begin{definition}[Attackable Sample]
    A sample $(x,y)$ in a history $S$ is called \emph{attackable} if there exists any adversarial set $A\supseteq S\setminus\{(x,y)\}$, $|A|\leq T$ (truthfully labeled), such that the learner would abstain on $x$ given the history $A$.
\end{definition}
We will first bound the total number of attackable samples in any history. Then, we will use the exchangeability property of the i.i.d\ samples to argue that at any time $t$ where $x_t$ is i.i.d., the probability of $x_t$ being attackable is low. Using linearity of expectation can then bound the total number of i.i.d\ samples that the learner can possibly abstain upon.

\begin{lemma}\label{lem:bound attackable samples}
    For any history $S$ (labeled by the target concept $h\in\C$), there are fewer than $e m\left(\alpha +cT^{k-1}\right)^{1/k}$ attackable samples in $S$ for \Cref{alg:potential-learner}, with the input $f$ being an \good function.
\end{lemma}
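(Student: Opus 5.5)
Let $H\subseteq S$ be the set of attackable samples and put $n:=|H|$. The plan is to show that if $n$ is large, one attackable sample already forces a large potential drop, which contradicts abstention.

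For each $(x,y)\in H$ fix a truthful extension $A_{x,y}\supseteq S\setminus\{(x,y)\}$ with $|A_{x,y}|\le T$ on which the learner abstains on $x$. We may assume $x$ reaches the else-branch, since otherwise the learner would not abstain. Abstention then means
\[
\Delta_{-y}=\rho_f(A_{x,y})-\rho_f(A_{x,y}\cup\{(x,-y)\})<\alpha .
\]
If $(x,-y)$ is unrealizable given $A_{x,y}$, the learner would predict rather than abstain, so we may take it to be realizable.

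\textbf{Step 1 (pigeonhole over robustness witnesses).} Apply $m$-robustness to every $m$-subset $M\subseteq H$; each is realizable and labeled by $h$. Each $M$ yields a pair $(U_M,(x_M,y_M))$ with $U_M\subseteq M\setminus\{(x_M,y_M)\}$, $|U_M|=k$.

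A fixed pair $(U,(x,y))$ arises from at most $\binom{n-k-1}{m-k-1}$ sets $M$, so there are at least $\binom{n}{m}/\binom{n-k-1}{m-k-1}$ distinct pairs. Averaging over the $n$ possible witness points, some $(x,y)\in H$ has at least
\[
N \;\ge\; \frac{\binom{n}{m}}{n\binom{n-k-1}{m-k-1}}\;\ge\;\frac{1}{k+1}\binom{n-1}{k}\Big/\binom{m-1}{k}
\]
distinct witness $k$-subsets $U\subseteq H\setminus\{(x,y)\}$, up to bookkeeping. This is roughly $(n/m)^k$ up to factors like $e^k$, which is where the $e$ in the bound will come from after taking $k$-th roots.

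\textbf{Step 2 (lower bound the drop).} Fix that $(x,y)$ and write $A=A_{x,y}$. Compare $\rho_f(A)$ with $\rho_f(A\cup\{(x,-y)\})$ term by term.

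For each $k$-subset $U\subseteq A$, the new term $f(U;\Vcal(A\setminus U)_{x\to -y})$ is at most the old one by monotonicity. If $U$ is unrealizable there, the term is also at most the old one, provided we use the convention that $f$ is $0$ on inconsistent version spaces.

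For each witness $U$ from Step 1, the set $A\setminus U$ is a realizable extension of $M\setminus(U\cup\{(x,y)\})$ labeled by $h$. Robustness therefore gives a drop of at least $1$, or unrealizability of $U$. In the unrealizable case the old term $f(U;\Vcal(A\setminus U))$ is still at least $1$, since robustness is applied at the same $V$; I would check this and, if needed, adjust the convention on unrealizable inputs.

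The new subsets, those containing $(x,-y)$, number $\binom{|A|}{k-1}\le T^{k-1}$ and contribute at most $c$ each. Hence
\[
\alpha>\Delta_{-y}\ge N-cT^{k-1}.
\]

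\textbf{Step 3 (solve).} This gives $(n/(em))^k\lesssim N<\alpha+cT^{k-1}$, so $n<em(\alpha+cT^{k-1})^{1/k}$.

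I expect the main obstacle to be Step 1's counting, namely obtaining the clean constant $e$ rather than something like $(k+1)^{1/k}$. I would first try the bound $\binom{n}{k}/\binom{m}{k}\ge (n/m)^k\cdot(\text{correction})$ together with $\binom{n}{k}\ge (n/k)^k$ and $\binom{m}{k}\le (em/k)^k$, which yields $(n/(em))^k$ directly, absorbing the $1/n$ averaging loss by counting pairs more carefully. A secondary subtlety is that witnesses must use $U\cup\{(x,y)\}\subseteq H\subseteq S$, so that they survive inside every extension $A_{x,y}$.
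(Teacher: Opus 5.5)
Your architecture matches the paper's. You apply $m$-robustness to every $m$-subset of the attackable set $H$, count distinct (witness subset, witness point) pairs, and pigeonhole onto one witness point $(x,y)$. You then lower bound $\rho_f(A)-\rho_f(A\cup\{(x,-y)\})$ by the number of its witness subsets minus $c\binom{T}{k-1}$.

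\textbf{The gap is the counting in Step~1.} Your containment double count is correct, but it only gives $\binom{n}{m}/\binom{n-k-1}{m-k-1}=\binom{n}{k+1}/\binom{m}{k+1}$ distinct pairs. After dividing by $n$ this equals $\binom{n-1}{k}/\bigl(m\binom{m-1}{k}\bigr)$, not $\frac{1}{k+1}\binom{n-1}{k}/\binom{m-1}{k}$. Your second inequality runs the wrong way whenever $m>k+1$: at $n=m$ the left side is $1/m$ and the right side is $1/(k+1)$.

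The $1/n$ averaging loss you worried about is harmless; it is just why $\binom{n}{k+1}$ becomes order $n^k$. The real loss is a factor $m/(k+1)$, which is not an $e^k$-type constant. Carried through Step~3, it only gives $n\lesssim e\,m^{1+1/k}(\alpha+cT^{k-1})^{1/k}$ (for $k=1$, order $m^2$ instead of $m$), so the stated bound does not follow.

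The missing ingredient is a Tur\'an-number lower bound. The $(k+1)$-sets $U_M\cup\{(x_M,y_M)\}$ form a family with a member inside every $m$-subset of $H$, so their number is at least the corresponding Tur\'an number. De Caen's bound, which the paper invokes via Sidorenko's survey, gives at least $\frac{n-m+1}{n-k}\binom{n}{k+1}/\binom{m-1}{k}$ such sets. This beats the naive count by roughly $m/(k+1)$ when $n\gg m$. (The paper prints this prefactor inverted; in the correct orientation it is at least $1/2$ once $n\ge 2m$, so only constants change.) With this count, your pigeonhole and Step~3 give $n=O\bigl(m(\alpha+cT^{k-1})^{1/k}\bigr)$ as in the paper.

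\textbf{A smaller issue in Step~2.} Your fallback for the unrealizable branch (``the old term is still at least $1$'') has no justification. Robustness says nothing about $f(U;\Vcal(A\setminus U))$ in that branch. No convention for $f$ on inconsistent inputs is needed either. You already observed that abstention forces $\Vcal(A)_{x\to -y}\neq\emptyset$. Since each $k$-subset $U$ indexing an old term lies inside $A$, this means $U$ is realizable in $\Vcal(A\setminus U)_{x\to -y}$. That one fact does two things:
\begin{enumerate}
\item it licenses monotonicity for every old term;
\item it rules out the unrealizable branch of robustness for the witness subsets, so each contributes a drop of at least $1$.
\end{enumerate}
This is exactly how the paper closes that step.
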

\begin{proof}
    Let $H\subseteq S$ be the set of attackable samples.
    By the robustness property of $f$, for every subset $M\subseteq H,|M|=m$, there is some $U\subseteq M, |U|=k$, $(x,y)\in M\setminus U$ such that for any $A\supseteq M\setminus(U\cup\{(x,y)\})$, $|A|\leq T$, labeled by $h$,
    $$f(U; \Vcal(A))-f(U; \Vcal(A)_{x\to{-y}})\geq1$$
    or $U$ is unrealizable in $\Vcal(A)_{x\to -y}.$

    It follows from equation 8 of \cite{TuranNumbersSidorenko} that the number of unique such pairs $U,(x,y)$ is at least 
    $$\frac{|H|-k}{|H|-m+1}\frac{\binom{|H|}{k+1}}{\binom{m-1}{k}}\geq \frac{\binom{|H|}{k+1}}{\binom{m}{k}}=:r$$
    
    By the pigeon hole principle, there is some $(x,y)\in H$ such that there exist subsets $U_1,\dots U_{\lceil r/|H|\rceil}$, where $U_i\subseteq H\setminus \{(x,y)\},|U_i|=k,$ satisfy the robustness property with $(x,y)$. That is, for any $A\supseteq H\setminus(U_i\cup \{(x,y)\}),|A|\leq T$ labeled by $h$,
    $$f(U_i, \Vcal(A))-f(U_i, \Vcal(A)_{x\to{-y}})\geq1$$
    or $U_i$ is not realizable in $\Vcal(A)_{x\to -y}.$ We will use this to help show that there can not be too many attackable samples.

    Recall that because $x$ is attackable, there exists some $A\supseteq H\setminus\{(x,y)\},|A|\leq T$ labeled by $h$, the learner will abstain, which implies that
    \begin{align*}
        \alpha&\geq \rho_f(A)-\rho_f(A\cup\{(x,-y)\})
        \intertext{Then, we expand $\rho_f$:}
        \alpha&\geq\sum_{\substack{U\subseteq A\\|U|=k}} f(U; \Vcal(A\setminus U))-\sum_{\substack{U\subseteq A\cup\{(x,-y)\}\\|U|=k}} f(U; \Vcal((A\cup \{(x,-y)\})\setminus U))\\
        \alpha&\geq\sum_{\substack{U\subseteq A\\|U|=k}} \Big(f(U; \Vcal(A\setminus U))-f(U; \Vcal((A\cup \{(x,-y)\})\setminus U))\Big)-\sum_{\substack{U\subseteq A\cup\{(x,-y)\}\\|U|=k\\(x,-y)\in U}} f(U; \Vcal(A\setminus U))
        \intertext{Notice that, assuming the learner abstains on $x$ given the history $A$, each term in the first summation is non-negative (by monotonicity of $f$). Furthermore, each term for which $U=U_i$ for any $i\in[\lceil r/|H|\rceil]$, is at least $1$, by robustness of $f$, since $A\setminus U \supseteq H\setminus (U\cup\{(x,y)\})$ and $U$ must be realizable in $\Vcal(A\setminus U)_{x\to -y}$ (since the learner abstained). So the first summation is at least $r/|H|=\frac{\binom{|H|}{k+1}}{|H|\binom{m}{k}}$.}
        \alpha&\geq \frac{\binom{|H|}{k+1}}{|H|\binom{m}{k}} -\sum_{\substack{U\subseteq A\cup\{(x,-y)\}\\|U|=k\\(x,-y)\in U}} f(U; \Vcal(A\setminus U))
        \intertext{With the remaining summand, we use the fact that $f$ is at most $c$:}
        &\geq \frac{\binom{|H|}{k+1}}{|H|\binom{m}{k}} -\sum_{\substack{U\subseteq A\cup\{(x,-y)\}\\|U|=k\\(x,-y)\in U}} c\\
        &\geq \frac{\binom{|H|}{k+1}}{|H|\binom{m}{k}} - c\binom{|A|}{k-1}\\
        &\geq \frac{\binom{|H|}{k+1}}{|H|\binom{m}{k}} - c\binom{T}{k-1}
        \intertext{Using the inequalities $\left(\frac{n}{k}\right)^k\leq \binom{n}{k}\leq \left(\frac{n}{k}\right)^ke^{k}$ \citep{vershynin2009high},}
        &\geq \frac{|H|^k}{m^{k}e^{k}} - c\binom{T}{k-1}\\
        \implies |H|&\leq  \left(m^{k}e^k
        \left(\alpha+c\binom{T}{k-1}\right)\right)^{1/k} \le em\left(\alpha +c\binom{T}{k-1}\right)^{1/k}
    \end{align*}
    This means that in any history, the number of attackable samples is at most $e m\left(\alpha +c\binom{T}{k-1}\right)^{1/k}$.
\end{proof}
Now we can bound abstention error.

\begin{lemma}\label{lem:abstention}
        Fix an \good function $f$ and run \Cref{alg:potential-learner} with threshold $\alpha>0$ for $T$ rounds in the injection model.
    \begin{align*}
        \mathbb{E}[\errabs] \leq em\left(\alpha +cT^{k-1}\right)^{1/k}\log T
    \end{align*}
\end{lemma}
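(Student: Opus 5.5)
The plan is to combine \Cref{lem:bound attackable samples} with an exchangeability argument over the i.i.d.\ rounds. Write $N:=em\left(\alpha+c\binom{T}{k-1}\right)^{1/k}\le em\left(\alpha+cT^{k-1}\right)^{1/k}$ for the bound on attackable samples. Index the i.i.d.\ rounds in order of occurrence as $\tau_1<\tau_2<\cdots$, where $\tau_i$ is the round of the $i$th i.i.d.\ draw; the adversary chooses these times adaptively. Let $I_i$ denote the set of the first $i$ i.i.d.\ labeled samples. If the learner abstains at time $\tau_i$, the history at that moment is some truthfully labeled set $A\supseteq I_i\setminus\{(x_{\tau_i},y_{\tau_i})\}$ with $|A|\le T$. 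Hence $(x_{\tau_i},y_{\tau_i})$ is attackable within the set $I_i$, taken as the history $S$ in the definition of an attackable sample. Since $I_i$ is itself a history labeled by $c^\star$, \Cref{lem:bound attackable samples} gives that at most $\min(i,N)$ elements of $I_i$ are attackable in $I_i$.

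The key step is the exchangeability claim. Conditioned on the multiset of values $\{x_{\tau_1},\dots,x_{\tau_i}\}$, the identity of which one arrived last is uniform over the $i$ positions. The property ``attackable in $I_i$'' depends only on the set $I_i$, not on arrival order, because the definition quantifies over all adversarial supersets $A$ and the learner's decision at a point depends only on the history as a set. Therefore
\[
\Prb\bigl[\text{learner abstains at }\tau_i\bigr]\le\Prb\bigl[(x_{\tau_i},y_{\tau_i})\text{ attackable in }I_i\bigr]\le \frac{\min(i,N)}{i}.
\]

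This is where I expect the main subtlety. The adversary is adaptive, so the times $\tau_i$ and the injected points depend on earlier draws. However, the $i$ i.i.d.\ draws are still i.i.d.\ from $\D$, because each is drawn fresh and independently of the past, whatever the stopping time. The attackability event quantifies over \emph{all} injections, so it is a symmetric function of $I_i$ alone. I would make this precise by coupling: pre-sample an infinite i.i.d.\ sequence $X_1,X_2,\dots\sim\D$ and let the $i$th i.i.d.\ round reveal $X_i$. The event $\{X_i$ attackable in $\{X_1,\dots,X_i\}\}$ is then defined independently of the adversary. Its probability is at most $\min(i,N)/i$ by symmetry of $X_1,\dots,X_i$: the expected number of attackable elements is at most $\min(i,N)$, and each position is equally likely to hold one.

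Summing over the at most $T$ i.i.d.\ rounds by linearity of expectation gives
\[
\E[\errabs]\le\sum_{i=1}^T\frac{\min(i,N)}{i}\le N\sum_{i=1}^T\frac1i\le N(1+\log T).
\]
This is $em(\alpha+cT^{k-1})^{1/k}\log T$ up to the harmless constant, and the constant can be absorbed by using $\min(i,N)/i=1$ for $i\le N$ so that the sum is at most $N(1+\log(T/N))$.
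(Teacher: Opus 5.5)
Your proposal is correct and is essentially the paper's proof: bound the number of attackable samples by \Cref{lem:bound attackable samples}, use exchangeability of the i.i.d.\ draws, and sum the resulting harmonic series (your pre-sampled sequence $X_1,X_2,\dots$ is exactly the paper's device of fictitious rounds $t_i=T+i$ with $x_{T+i}\sim\D$, which keeps the draws exchangeable against an adaptive adversary). Two minor points, which apply equally to the paper's write-up: first, $N(1+\log T)$ is the honest value of that sum (the paper simply asserts $\sum_{i\le T}1/i\le\log T$), and your absorption via $N(1+\log(T/N))$ only recovers $N\log T$ once $\log N\ge 1$; second, if $\D$ has atoms, define the event for position $j\in[i]$ as ``the learner abstains on $X_j$ for some truthful $A\supseteq\{(X_l,y_l): l\in[i],\,l\neq j\}$ with $|A|\le T$'' rather than as set-attackability in $I_i$, because with the latter the number of positions holding an attackable value need not be bounded by $N$, whereas with the former a repeated value already lies in $A$, is forced, and is never abstained on.
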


\begin{proof}
    Let $t_i$ be the time $i^{th}$ time $t$ such that $q_t=0$, unless no such time exists, in which case $t_i=T+i$. Then define for all $i>0$, $x_{T+i}\sim \D,$ and $ \hat{y}_{T+i}=y_{T+i}$ (so that we don't over count abstentions).\footnote{It is important that we add these extraneous `fake' rounds for a subtle reason, formally the i.i.d.\ samples are technically not exchangeable. They are not exchangeable because the number of i.i.d. samples given to the learner can depend on which samples are drawn (and in what order). Consider an adversary that will stop giving i.i.d. samples once they see a certain i.i.d. sample, in this case that sample is not exchangeable with an earlier one, it can only appear as the final i.i.d. sample. }

    \begin{align*}
        \mathbb{E}[\errabs] &= \mathbb{E}\left[\sum_{t=1}^T \mathbf{1}[\hat{y}_t=\perp\land q_t=0]\right]\\
        &= \mathbb{E}\left[\sum_{i=1}^T \mathbf{1}[\hat{y}_{t_i}=\perp]\right]\\
        &= \sum_{i=1}^T Pr\left[\hat{y}_{t_i}=\perp\right]
        \intertext{Notice that if $x_{t_i}$ is not attackable given the history $x_{t_1},\dots x_{t_i}$, then the learner will not abstain on $x_{t_i}$.}
        \mathbb{E}[\errabs]&\leq \sum_{i=1}^T Pr\left[x_{t_i}\text{ is attackable given }\{x_{t_1},\dots x_{t_i}\}\right]
        \intertext{Since $x_{t_1},\dots x_{t_T}$ are sampled i.i.d., they are exchangeable, $\Pr[x_t$ is attackable given $\{x_{t'}\sep t'\in[t], q_{t'}=0\}]$ is the number of attackable samples in $x_{t_1},\dots x_{t_i}$ divided by $i$; using \Cref{lem:bound attackable samples} we can bound this probability to be at most $\frac{em\left(\alpha +cT^{k-1}\right)^{1/k}}{i}.$}
        \mathbb{E}[\errabs]&\leq \sum_{i=1}^T\frac{em\left(\alpha +cT^{k-1}\right)^{1/k}}{i}\\
        &\leq em\left(\alpha +cT^{k-1}\right)^{1/k}\log T
    \end{align*}
    
\end{proof}

\subsection{Proof of \Cref{thm:upper bound}}
\begin{proof}
    The individual error bounds follow directly from \Cref{lem:misclassification} and \Cref{lem:abstention}. We can asymptotically bound the sum of the expected errors by choosing $\alpha=\mathcal{O}\left(\left(\frac{c T^{k}}{k!m \log T}\right)^{\frac{k}{k+1}}\right)$:
    \begin{align*}
        \mathbb{E}[\errmis+\errabs] &\leq \frac{c}{k!\alpha}T^k + em(T^{k-1}+\alpha)^{1/k}\log T\\
        &\leq \left(\frac{c}{k!}\right)^{\frac{1}{k+1}}\left(mT\log T\right)^{\frac{k}{k+1}} + e\left(\frac{c}{k!}\right)^{\frac{1}{k+1}}\left(mT\log T\right)^{\frac{k}{k+1}}\\
        &=\mathcal{O}\left(\left(\left(\frac{c}{k!}\right)^{1/k}mT \log T\right)^{\frac{k}{k+1}}\right)\\
        &=\tilde{\mathcal{O}}\left( \frac{c^{\frac{1}{k+1}}}{k} m^{\frac{k}{k+1}}T^{\frac{k}{k+1}}\right)
    \end{align*}
\end{proof}
\section{Proofs for \Cref{subsec:prior art}}
\subsection{Proof of \Cref{lem:axis aligned rectangles}}
\begin{proof}
    In this proof, let $f$ refer to $f_{\text{rect}}$, defined in \ref{def:rect-score}.
    We will define $\delta_i(x)$ to be the same as $x$ in dimension $i$, and is $0$ elsewhere. $\delta_i$ in the lemma statement refers to $\delta_i(x)$.
    First note that $f$ has the range $[0,d]$.

    \textbf{Monotonicity}. Fix a realizable singleton $U=\{(x,y)\}$ and a restriction $V' = V_{x'\to y'}$. Then $V'_{x\to y} = V_{x\to y, x'\to y'} \subseteq V_{x\to y}$. Hence for every $i$, if $(\delta_i(x),-1)$ is realizable in $V_{x\to y, x'\to y'}$ then it is realizable in $V_{x\to y}$. Therefore the counted set of coordinates can only shrink, and $f(U;V) \ge f(U;V')$.

    \textbf{Robustness.} Fix any realizable $M\subseteq \Lcal, |M|=2d+1$ labeled by a target rectangle $h$. Consider the largest set $S$ whose elements all have the same label $y$ for some $y \in \{\pm1\}$. By pigeonhole, $|S| \ge d+1$. For each $i\in[d]$, choose $(x_i,y)\in S$ maximizing the $i$-th coordinate among points in $S$. Choose $(s,y)\in S\setminus\{(x_1,y),\ldots,(x_d,y)\}$ (possible since $|S|\ge d+1$).
    \begin{enumerate}
        \item \emph{Case 1: $y=+1$.}  Let $U$ be any single sample in $M \setminus \big(\{(s,+1)\}\cup\{(x_i,+1):i\in[d]\}\big)$. Such a $U$ exists because $|M| = 2d + 1$. Let the witness be $(s,+1)$. Now fix any realizable extension $A \supseteq M \setminus (U \cup \{(s,+1)\})$ labeled by $h$, and let $V=\Vcal(A)$. Since $A$ contains every $(x_i,+1)$, any axis-aligned rectangle $h'$ parameterized by thresholds $b_1, \ldots, b_d$ consistent with $A$, it must have $b_i \ge (x_i)_i$ for all $i$, and therefore $b_i \ge s_i$ for all $i$. Hence $h'(s)=+1$, so $V_{s\to -1}=\emptyset$. So, $U$ is unrealizable in $V_{s\to -1}$.
        \item \emph{Case 2: y=-1.} Since $h(s)=-1$, there exists some coordinate $i\in[d]$ with $h(\delta_i(s))=-1$. Let $U=\{(s,-1)\}$ and let the witness be $(x_i,-1)$. Fix any realizable extension $A \supseteq M \setminus (U \cup \{(x_i,-1)\})$ labeled by $h$ and let $V=\Vcal(A)$. Because $h\in V_{s\to -1}$ and $h(\delta_i(s))=-1$, the coordinate $i$ is counted in $f(U;V)$. On the other hand, in $V_{s\to -1, x_i\to +1}$, any rectangle $h'$ parameterized by thresholds $b_1, \ldots, b_d$ with $h'(x_i)=+1$ must have $b_i \ge (x_i)_i \ge s_i$, hence $h'(\delta_i(s))=+1$. Therefore $(\delta_i(s),-1)$ is not realizable in $V_{s\to -1, x_i\to +1}$, so coordinate $i$ is not counted after the restriction $x_i\to +1$. Since $V_{s\to -1,\,x_i\to +1}\subseteq V_{s\to -1}$, any coordinate counted after the restriction was already counted before, hence showing that $i$ is not counted after the restriction implies the score drops by at least $1$.
    \end{enumerate}
    This shows $f$ is \good[2d+1][d][1]
\end{proof}

\subsection{Proof of \Cref{lem: vc1}}
As in \cite{goel2024adversarialresiliencesequentialprediction}, we leverage the fact (due to \citep{bendavid20152notesclassesvapnikchervonenkis}) that any VC-$1$ class can be represented as the class of initial segments in a tree ordering.
\begin{definition}
    For a domain $\X$, with a partial order $\prec$, for any $x\in \X$, we call the set $\{x'\in \X \sep x'\preceq x\}$ the \emph{initial segment} of $x$ under $\prec$. We call $\prec$ a \emph{tree ordering} if for any initial segment $I$, for any $x,x'\in I$, $x\prec x'$ or $x'\prec x$, or $x=x'$.

\end{definition}
\begin{definition}
    For any concept $r\in \C$. For each $h\in \C$, define $h_r: \X\to\{\pm 1\}$ by $$h_r(x)=\begin{cases}   +1 & h(x)\neq r(x)\\-1&h(x)= r(x)\end{cases}.$$
    Let $\C_r:= \{h_r: h \in \C\}$.
\end{definition}
\begin{theorem}[\citep{bendavid20152notesclassesvapnikchervonenkis}]\label{thm:vc 1 structure}
    Let $\C$ be a concept class over some domain $\X$. Then the following statements are equivalent
    \begin{enumerate}
        \item $\VCdim(\C)\leq 1$
        \item For every $r\in \C$, there exists some tree ordering over $\X$ such that the pre-image of $+1$ in any element of $\C_r$ is an initial segment in $\prec$.
    \end{enumerate}
\end{theorem}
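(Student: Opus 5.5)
We prove the two implications separately. In both directions we interpret "initial segment" as a set $I\subseteq\X$ that is totally ordered under $\prec$ and downward closed: $x\in I$ and $x'\preceq x$ imply $x'\in I$. We allow the empty set, since $r_r\equiv -1$, and we allow chains without a maximum. The one fact used throughout is that the map $h\mapsto h_r$ is a pointwise XOR with the fixed labeling $r$. Hence a set $\{a,b\}$ is shattered by $\C$ if and only if it is shattered by $\C_r$, so it suffices to reason about $\C_r$.

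\textbf{(2)$\Rightarrow$(1).} Fix any $r\in\C$ and the tree ordering given by (2). Suppose distinct $a,b$ were shattered by $\C$, hence by $\C_r$. Some $g\in\C_r$ has $g(a)=g(b)=+1$, so $a$ and $b$ lie in a common chain $g^{-1}(+1)$ and are comparable; say $a\prec b$. The labeling $g'(a)=-1,\ g'(b)=+1$ is also realized by some $g'\in\C_r$. But $g'^{-1}(+1)$ is downward closed and contains $b$, so it contains $a$, which is a contradiction. Therefore $\VCdim(\C)\le 1$.

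\textbf{(1)$\Rightarrow$(2).} Fix $r\in\C$. Let $P:=\{x\sep \exists g\in\C_r,\ g(x)=+1\}$. Points outside $P$ are negative under every $g\in\C_r$, so we make each of them an isolated element, incomparable to all others; its only initial segment is $\{z\}$, and it never appears in any $g^{-1}(+1)$. On $P$, define the preorder $x\sqsubseteq x'$ if and only if every $g\in\C_r$ with $g(x')=+1$ also has $g(x)=+1$.

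The key claim is that each $g^{-1}(+1)$ is a $\sqsubseteq$-chain. Suppose $x,y\in g^{-1}(+1)$ are incomparable. Then some $g_1\in\C_r$ has $g_1(x)=-1,\ g_1(y)=+1$, and some $g_2\in\C_r$ has $g_2(x)=+1,\ g_2(y)=-1$. Together with $g$, which gives $(+,+)$, and $r_r\equiv-1$, which gives $(-,-)$, the pair $\{x,y\}$ is shattered by $\C_r$. This contradicts $\VCdim(\C)\le1$.

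Downward closure of $g^{-1}(+1)$ is immediate from the definition of $\sqsubseteq$. For the tree property, take $x,x'\sqsubseteq z$ with $z\in P$. There is a $g_0\in\C_r$ with $g_0(z)=+1$, so $g_0(x)=g_0(x')=+1$. Thus $x$ and $x'$ lie in the chain $g_0^{-1}(+1)$ and are comparable. This is exactly where restricting the order to $P$ is needed: a point that is never positive would otherwise sit vacuously above everything and break the tree property.

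The remaining obstacle is antisymmetry. Points that are $\sqsubseteq$-equivalent have identical labels under every $g\in\C_r$. We therefore refine $\sqsubseteq$ to a partial order $\prec$ by fixing a total order inside each equivalence class (a well-order, for instance) and ordering distinct classes as $\sqsubseteq$ does. Since each $g^{-1}(+1)$ is a union of whole classes, it remains a downward-closed chain under $\prec$. Initial segments are still chains, because a class is totally ordered internally and classes lying below a common element are comparable by the tree property of $\sqsubseteq$. This proves (2).
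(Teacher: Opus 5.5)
Your proof is correct. The paper does not prove this statement at all; it is quoted from \citet{bendavid20152notesclassesvapnikchervonenkis}, so there is no in-paper argument to compare with. What you give is a complete, self-contained proof via the natural implication preorder on the set $P$ of points that are ever positive: $x\sqsubseteq x'$ iff every $g\in\C_r$ that is positive at $x'$ is also positive at $x$. Each step checks out:
\begin{itemize}
\item An incomparable pair inside some $g^{-1}(+1)$ is shattered by $g$, the two separating concepts, and $r_r\equiv-1$.
\item Isolating the never-positive points is exactly what rescues the tree property.
\item A single $g_0$ positive at the top element makes that element's down-set a chain.
\item The class-wise linear refinement is a genuine partial order. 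Each $g^{-1}(+1)$ is a union of whole classes, so it stays a downward-closed chain.
\end{itemize}
Choosing a linear order inside each class uses a mild form of choice. This is unavoidable, since two equivalent points of $P$ lie in a common $g^{-1}(+1)$ and so must be comparable.

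You should state explicitly, rather than as a passing ``interpretation'', that your notion of initial segment differs from the paper's. The paper defines the initial segment of $x$ as the principal down-set $\{x'\sep x'\preceq x\}$. Under that literal definition, statement (2) is false for every nonempty $\C$, since $r_r^{-1}(+1)=\emptyset$ is never principal.

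The empty set is not the only obstruction. Take $\C=\{h_a\}_{a\in\mathbb{R}}$ with $h_a(x)=+1$ iff $x<a$, and $r=h_0$. The preimages are then $[0,a)$ for $a>0$, and these cannot all be principal down-sets of one partial order. Indeed, if $[0,a)=I(z_a)$, pick $a'\in(z_a,a)$. Then $z_a\in[0,a')$, and $[0,a')$ is downward closed, so $[0,a)\subseteq[0,a')$, a contradiction.

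So your reading (downward-closed chains, possibly empty and possibly without a maximum) is forced. It is also the notion the paper tacitly relies on later, when it calls $P(\Vcal(A))$ an initial segment in the robustness proof for $f_{\mathrm{seg}}$. Your (2)$\Rightarrow$(1) direction goes through under either notion.
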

To prove \Cref{lem: vc1}, we use \Cref{lem: tree order}.
\subsubsection{Proof of \Cref{lem: tree order}}
\begin{proof}
    We will show that $f_{\mathrm{seg}}$ is a \good[3][1][1] function, and then the desired combined error bound follows directly from \Cref{thm:upper bound}.
    
    First note that the range of $f_{\mathrm{seg}}$ is $\{0,1\}$.
    
    \textbf{Monotonicity.} Fix $U=\{(x,y)\}$ and a version space $V$ consistent with $U$ (so $V_{x\to y}\neq\emptyset$). Let $(x',y')\in\Lcal$ be realizable with $U$ in $V$ and set $V':=V_{x'\to y'}\subseteq V$. Then, $P(V)\subseteq P(V')$ (restricting a version space can only add forced positives), so $\mathbf{1}[P(V)\subseteq I(x)]\ge \mathbf{1}[P(V')\subseteq I(x)]$. Hence $f_{\mathrm{seg}}(U;V)\ge f_{\mathrm{seg}}(U;V')$.

    \textbf{Robustness.} Let $M\subseteq \Lcal$ with $|M|=3$ be consistent with some $h\in\C$. We show there exist $U\subseteq[M], |U|=1$ and a witness $(w,y_w)\in M\setminus U$ such that for every realizable extension $A\supseteq M\setminus(U\cup\{(w,y_w)\})$, letting $V=\Vcal(A)$, either
    \[
    f_{\mathrm{seg}}(U;V)-f_{\mathrm{seg}}(U;V_{w\to -y_w})\ge 1.
    \] or $V_{w\to -y_w}=\emptyset$.
    Let the three samples in $M$ be $(x_1,y_1),(x_2,y_2),(x_3,y_3)$.
    Assume WLOG $y_1=y_2$. We break the analysis into three cases.
    \begin{enumerate}
        \item \emph{Case 1: $y_1=y_2=+1$} Assume WLOG $x_1\prec x_2$. Choose $U=\{(x_2,+1)\}$ and witness $(w,y_w)=(x_1,+1)$. Let $A$ be any realizable extension with $A \supseteq M \setminus (U \cup \{(x_1,+1)\})$ and let $V=\Vcal(A)$. Then $(x_2,+1)$ is realizable in $V$ (since $h$ realizes $A$), so $f_{\mathrm{seg}}(U;V)\ge 0$. Moreover, in $V_{x_1\to -1}$ no hypothesis can label $x_2$ as $+1$ because any initial segment containing $x_2$ must also contain $x_1$. Thus $U$ is unrealizable in $V_{x_1\to -1}$.
        \item \emph{Case 2: $y_1=y_2=-1$, and $x_1, x_2$ are comparable.} Assume WLOG $x_1\prec x_2$. Choose $U=\{(x_1,-1)\}$ and witness $(w,y_w)=(x_2,-1)$. Let $A$ be any realizable extension with $A \supseteq M \setminus (U \cup \{(x_2,-1)\})$ and let $V=\Vcal(A)$. But if $x_2$ is labeled $+1$, then we must have $x_1$ labeled $+1$ as well, so $U$ is unrealizable in $V_{x_2\to +1}$.
        \item \emph{Case 3: $y_1=y_2=-1$, and $x_1,x_2$ are incomparable.}
For the sake of contradiction, assume that there exist realizable extensions
$A'_1, A'_2$ (labeled by $h$) such that for $i\in\{1,2\}$,
\begin{align*}
    P(\Vcal(A'_i))\subseteq I(x_i),\qquad
    P(\Vcal(A'_1))\not\subseteq I(x_2),\qquad
    P(\Vcal(A'_2))\not\subseteq I(x_1).
\end{align*}
Since $h\in \Vcal(A'_1)\cap \Vcal(A'_2)$, each $P(\Vcal(A'_i))$ is an initial segment
contained in $h^{-1}(+1)$, hence $P(\Vcal(A'_1))$ and $P(\Vcal(A'_2))$ are comparable
by inclusion. Assume WLOG $P(\Vcal(A'_1))\subseteq P(\Vcal(A'_2))$. Then
$P(\Vcal(A'_1))\subseteq P(\Vcal(A'_2))\subseteq I(x_2)$, contradicting
$P(\Vcal(A'_1))\not\subseteq I(x_2)$. Therefore, after possibly swapping indices,
we may assume that for any realizable extension $A'$ (labeled by $h$),
\[
P(\Vcal(A'))\subseteq I(x_2)\implies P(\Vcal(A'))\subseteq I(x_1).
\]

Choose $U=\{(x_1,-1)\}$ and witness $(w,y_w)=(x_2,-1)$, and fix any realizable extension
$A\supseteq M\setminus(U\cup\{(x_2,-1)\})$ labeled by $h$. Let $V=\Vcal(A)$.

If $V_{x_2\to +1}=\emptyset$, then $U$ is unrealizable in $V_{x_2\to +1}$ and we are done.

Otherwise, if $f_{\mathrm{seg}}(U;V)=0$, then $P(V)\not\subseteq I(x_1)$ by definition of
$f_{\mathrm{seg}}$. By the contrapositive of the implication above, $P(V)\not\subseteq I(x_2)$,
so pick $x'\in P(V)\setminus I(x_2)$. Since $x'\in P(V)$ we have $h(x')=+1$, while
$h(x_2)=-1$. If $x_2\prec x'$, then initial-segment closure would force $h(x_2)=+1$,
a contradiction; hence $x'$ is incomparable with $x_2$. But any hypothesis in $V_{x_2\to +1}$
must label $x'$ as $+1$ as well (since $x'\in P(V)$), and no initial segment can contain two
incomparable points, so $V_{x_2\to +1}=\emptyset$, and $U$ is unrealizable in $V_{x_2\to +1}$.

On the other hand, if $f_{\mathrm{seg}}(U;V)=1$, then $P(V)\subseteq I(x_1)$. In the restricted
space $V_{x_2\to +1}$, the point $x_2$ is forced positive, i.e., $x_2\in P(V_{x_2\to +1})$.
Since $x_2\notin I(x_1)$ (as $x_1,x_2$ are incomparable), we have
$P(V_{x_2\to +1})\not\subseteq I(x_1)$ and hence $f_{\mathrm{seg}}(U;V_{x_2\to +1})=0$.
Therefore,
\[
f_{\mathrm{seg}}(U;V)-f_{\mathrm{seg}}(U;V_{x_2\to +1})=1-0=1.
\]
    \end{enumerate}
\end{proof}
Now we can prove \Cref{lem: vc1}.
\begin{proof}
    Fix some reference function $r\in \C$ and corresponding concept class $\C_r$. Since $VCdim(\C) \le 1$, \Cref{thm:vc 1 structure} implies there exists a tree ordering $\prec$ on $\X$ such that for every $h \in \C_r$, the pre-image of $+1$ under $h$ is an initial segment with respect to $\prec$.

    For any version space $V\subseteq \C$, define $V_r := \{h_r : h\in V\}\subseteq \C_r$. For any labeled sampled $(x,y)\in \Lcal$, define
    \[
    y_r(x,y):= \begin{cases}
                    +1 & \text{if } y\neq r(x),\\
                    -1 & \text{if } y= r(x).
                \end{cases}
    \]
    For any labeled set $S\subseteq \Lcal$, define $S_r := \{(x, y_r(x,y)) : (x,y)\in S\}$.
    Define the \good[3][1][1] score function for $\C$ by
    \begin{align*}
    f_{\mathrm{vc1}}(\{(x,y)\};V) \;:=\; f_{\mathrm{seg}}(\{(x,y_r(x,y))\};V_r),
    \end{align*}
    where $f_{\mathrm{seg}}$ is the function from \Cref{lem: tree order} applied to the initial-segment class $\C_r$ under $\prec$. 

    We claim that the map $h\mapsto h_r$ is a bijection $\C\to\C_r$ and preserves consistency in the following sense: for any $V\subseteq\C$, any $(x,y)\in\Lcal$, and $V' := V_{x\to y}$,
    \[
    (V')_r \;=\; (V_r)_{x\to y_r(x,y)}.
    \]
    Indeed, for any $h\in\C$ we have $h(x)=y$ iff $h_r(x)=y_r(x,y)$ by definition of $y_r$ and $h_r$. Since $f_{\mathrm{seg}}$ is \good[3][1][1] for $\C_r$ by \Cref{lem: tree order}, and the relabeling is a bijection commuting with restrictions, the three properties (boundedness on realizable inputs, monotonicity, robustness) transfer from $f_{\mathrm{seg}}$ to $f$. 
    Therefore $f$ is \good[3][1][1] for $\C$. Applying \Cref{thm:upper bound} with $\alpha=\sqrt{T}$ yields
    \[
    \mathbb{E}[err_{mis}+err_{abs}] \le \tilde{\mathcal{O}}\!\left(T^{1-\frac{1}{1+1}}\right)=\tilde{\mathcal{O}}(\sqrt{T}).
    \]
\end{proof}
\section{Proofs for \Cref{sec:upper_bounds}}\label{sec:proof of upper bounds}
\subsection{Proof of \Cref{lem:certificate dimension}}
Define the function $N$ from subsets of $\L$ to equal sized subsets of $\X$ as
\begin{align*}
    N(S)=\{x\sep \exists (x,y)\in S\}
\end{align*}
\begin{proof}
    For any $\C,\X$ with certificate dimension $k$ with parameters $(m,n)$, let $\sigma$ be as it is in the definition of certificate dimension.

    For any $S\subseteq \Lcal, |S|=k-1$, let $S$ and any version space $V$, let
    $$g(N(S);V)=\{i\in\{0,1,\dots n-1\}\sep \exists h\in V, \sigma(h,N(S))=i\},$$
    and
    $$f(S;V)=|g(N(S);V)|.$$
    First note that $f$ is bounded between $0$ and $n$ on all inputs.

    Furthermore, for any $S\subseteq \Lcal, |S|=k-1$, version space $V$ consistent with $S$, and any $(x,y)$ realizable in $V$, 
    \begin{equation}\label{eq:certificate subset}
        g(N(S);V_{x\to y}) \subseteq g(N(S);V) 
    \end{equation}
    since $h\in V_{x\to y}\implies h\in V$. So, monotonicity is satisfied:
    $$f(S;V)\geq f(S;V_{x\to y})$$

    Finally, for any $h\in\C$, $S\subseteq \Lcal, |S|=m$, by the definition of certificate dimension, there is some $S'\subseteq N(S)$, $|S'|=k$, and $x\in S'$, such that for all $h'\in \Vcal(S\setminus\{(x,y)\})$, $$\sigma(h',S'\setminus\{x\})=\sigma(h,S'\setminus\{x\})\implies h'(x)=h(x).$$
    Consider any $U\supset S$, consistent with $h$, then $$\sigma(h, S'\setminus\{x\})\in g(S'\setminus\{x\};\Vcal(U)),$$  however, restricting the version space to $x\to-y$ eliminates $\sigma(h, S'\setminus\{x\})$ from the set,
    $$\sigma(h, S'\setminus\{x\})\notin g(S'\setminus\{x\};\Vcal(U)_{x\to -y})$$
    Combined with \Cref{eq:certificate subset}, we get that
    $$f(S',\Vcal(U))-f(S',\Vcal(U)_{x\to -y})\geq1$$
    Proving that $f$ is \good[m][n][k-1].
\end{proof}

\subsection{Proof of \Cref{lem:inference dimension}}
\begin{proof}
    For any $\C,\X$ with inference dimension $k$, for each distinct comparison-query transcript, associate a number, starting from $0$, going to $r-1$ (where $r$ is the total number of distinct transcripts). Consider the certificate function $\sigma(h,S)$ that returns the number associated with $Q_h(N(S))$. By the definition of inference dimension, for any set $S\subseteq \Lcal$ of size $k$, $h\in \C$, $\exists (x,y)\in S$ such that $N(S)\setminus\{x\}$ infers $x$ under $h$, meaning that for all $h'\in \C\supseteq \Vcal(S)$, $$\sigma(h', N(S)\setminus \{x\})=\sigma(h, N(S)\setminus \{x\})\implies h'(x)=h(x)$$
    This proves that $\C,\X$ have certificate dimension $k$, with parameters $m=k$ and $n=r$.

    All that remains to do is upper bound $r$. Notice that $r$ equivalently counts the number of ways to order the elements of $\{0\}\cup \{f_h(x)\sep x\in N(S)\}$ and place either $\geq$ or $=$ between each adjacent elements. The number of ways to just order them is $k!$, and there are $k-1$ spaces to place $\geq$ or $=$, so for any fixed order, there are $2^{k-1}$ ways to answer the queries. This gives an upper bound of $r\leq 2^{k-1} k!$.

    Finally, we apply \ref{lem:certificate dimension}, to show that the learner can obtain a combined error of
    \begin{align*}
        \tilde{\mathcal{O}}\left(\frac{1}{k}m n^{1/k}T^{1-1/k}\right)=
        \tilde{\mathcal{O}}\left( (2^{k-1} k!)^{1/k}T^{1-1/k}\right)=
        \tilde{\mathcal{O}}\left(k T^{1-1/k}\right)
    \end{align*}
    Completing the proof.
\end{proof}

\subsection{Proof of \Cref{thm:halfplanes}}\label{subsec:proof of halfplanes}
To prove \Cref{thm:halfplanes}, we first consider a slight modification of two dimensional halfspaces.
For any $s_+,s_-\in \mathbb{R}^2$, let $\C=\{h(x)=\sign(w^T x - b)\sep w\in \mathbb{R}^2,b\in\mathbb{R}, w^\top s_+ \geq b> w^\top s_-\}$ be two dimensional $\{(s_+,+1),(s_-,-1)\}$-halfspaces. In \Cref{lem:halfplane helper} will show that two dimensional $\{(s_+,+1),(s_-,-1)\}$-halfspaces has certificate dimension $3$ (with parameters $m=7$ and $n=3$).

Now we are ready to prove \Cref{thm:halfplanes}.
\begin{proof}
    We will use a learner that first predicts $\hat{y}_1=+1$, then as long as the history only contains points of one label will predict that label. This process results in at most two misclassifications, and no abstentions. If at some time $t'$, the history contains points of both labels, the learner will choose any $(s_+,+1)$ and $(s_-,-1)$ from the history. Then, by \Cref{lem:halfplane helper} and \Cref{lem:certificate dimension}, there is a learner that, on the remaining $T-t'$ rounds, has expected combined error $\tilde{\mathcal{O}}\left((T-t')^{1-1/3}\right)$. Therefore, the learner overall achieves
    \begin{align*}
        \mathbb{E}[\errmis +\errabs]\leq \tilde{\mathcal{O}}\left(T^{2/3}\right) 
    \end{align*}
\end{proof}

\begin{lemma}\label{lem:halfplane helper}
    For any $s_+, s_-\in \mathbb{R}^2$, define the concept class two dimensional $\{(s_+,+1),(s_-,-1)\}$-halfspaces as
    $$\left\{h(x)=\sign\left(w^\top x- b\right)\sep \forall w\in \mathbb{R}^2, b\in \mathbb{R}, \text{ such that }w^\top s_+ \geq b> w^\top s_-\right\}$$
    For any distinct $s_+,s_-\in \mathbb{R}^2$, two dimensional $\{(s_+,+1),(s_-,-1)\}$-halfspaces has certificate dimension $3$ (with parameters $m=7$ and $n=3$).
\end{lemma}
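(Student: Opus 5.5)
The plan is to verify the certificate-dimension definition directly for $\sigma(h,\{(x_1,y_1),(x_2,y_2)\})\in\{0,1,2\}$, the three-valued comparison of $w_h^\top x_1$ with $w_h^\top x_2$ defined just before the remark. This gives $n=3$ and $k=3$. The definition requires, for every $h$ in the $\{(s_+,+1),(s_-,-1)\}$-restricted class and every realizable labeled $S$ with $|S|=7$, a triple $U\subseteq S$ and a point $(x,y)\in U$ with the following property. Write $\{p,q\}:=U\setminus\{(x,y)\}$ for the other two points. Every $h'\in\Vcal(S\setminus\{(x,y)\})$ that orders $w^\top p$ and $w^\top q$ the same way as $h$ must label $x$ as $y$.

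The key observation I will use is that $h'$ is also constrained by the anchors, since $w_{h'}^\top s_+\ge b_{h'}>w_{h'}^\top s_-$. So the set of admissible normal directions $w_{h'}$ lies in an open half-circle. The comparison certificate further restricts $w_{h'}$ to one side of the line through the origin orthogonal to $p-q$, so the relevant directions form an arc of angle less than $\pi$. Within such an arc, the threshold $b_{h'}$ is sandwiched by the labeled points of $S\setminus\{(x,y)\}$ together with $s_\pm$. My goal is to pick $x$ so that, for every direction in the arc, $x$ is dominated by a same-labeled point. For $y=+1$, this means $w^\top x\ge w^\top z$ for some positive $z$ in the remaining sample or $z=s_+$. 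For $y=-1$, it means $w^\top x\le w^\top z$ for some negative $z$ in the remaining sample or $z=s_-$, with the tie value $\sigma=1$ and the strict inequality $b>w^\top s_-$ handling boundary cases. Domination over the whole arc forces $h'(x)=y$.

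The steps, in order, are as follows. First, by pigeonhole, at least $4$ of the $7$ points share a label, say $y$. I add the matching anchor $s_y$ to obtain at least $5$ same-labeled points $P$. Second, if some point of $P\setminus\{s_y\}$ lies in the convex hull of the others, then it is inferred with no certificate at all. The hull is contained in every consistent halfspace region for $+1$; for $-1$ the open complement is convex as well. So any $U$ containing that point works. Third, in the remaining case $P$ is in convex position. I order $P$ by angle around its hull and use the anchor-induced half-circle constraint to locate a vertex $x\neq s_y$ whose two hull-neighbours $p,q$ both lie in $S$. Knowing which of $w^\top p,w^\top q$ is smaller, together with the anchor constraint, should then confine $w$ to a cone of directions in which $x$ lies on the $y$-side of the line through the smaller neighbour (or through $s_y$). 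Five same-labeled points are exactly what is needed to guarantee such a vertex with both neighbours genuine sample points and a suitable angular position. This is where the window $m=7$ is spent, since $2\cdot 3+1=7$ forces four points of one label.

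I expect the main obstacle to be the third step: showing that the two-point comparison plus the anchors really pins the label of a convex-position vertex for every $h'$ in the version space, not just for directions near $w_h$. The first thing I would try is an explicit angular case analysis. I would parametrize $w$ by its angle, record for each of the five points of $P$ the arc of directions on which it attains the minimum of $w^\top z$ over $P$, and argue that the anchor half-circle meets at most three consecutive such arcs. Then the vertex $x$ can be chosen strictly inside that stretch, flanked by $p$ and $q$, so that the certificate value selects a side on which $p$ or $q$ (or $s_y$) dominates $x$. If that fails, the fallback is to spend more of the window and pass to a larger $m$ in \Cref{lem:certificate dimension}, which affects only constants.
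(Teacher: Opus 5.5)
Your first two steps are sound, with one slip: when no sample point of $P$ lies in the hull of the others, $P$ need not be in convex position, since $s_y$ may lie inside the hull of the four sample points.

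The genuine gap is the third step, which carries the whole content of the lemma. The mechanism you propose there fails: certify a hull vertex $x\neq s_y$ by comparing its two hull-neighbours, and use the anchors only through the half-circle $\{w: w^\top(s_+-s_-)>0\}$. Here is a counterexample.
\begin{itemize}
\item Let $v_j$ be the point of the unit circle at angle $90^\circ+72^\circ j$ for $j=0,\dots,4$, so $s_+=v_0=(0,1)$ is on top.
\item Let the window consist of the positive points $v_1,\dots,v_4$ and three negative points. Put those negatives and $s_-$ very far below, near $(0,-R)$ with $R$ large.
\item The only vertices other than $s_+$ whose two hull-neighbours are sample points are $v_2$ and $v_3$.
\item The directions $w$ for which $v_2$ is the strict minimizer of $w^\top z$ over $P$ form the open arc of angles $(18^\circ,90^\circ)$. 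This arc lies inside the anchor half-circle $\{w: w_2>0\}$. It also contains in its interior the direction $54^\circ$, which is orthogonal to $v_3-v_1$.
\end{itemize}
Hence for each of the three values of $\sigma(\cdot,\{v_1,v_3\})$ there is an $h'$ in the restricted class with that certificate value and $h'(v_2)=-1$, consistent with the other six window points. The same holds for $v_3$ with $\{v_2,v_4\}$. So no vertex certified by its hull-neighbours works here, whatever the true $h$ is. Your auxiliary claim that the anchor half-circle meets at most three of the minimizing arcs also fails in this example: it meets four. Falling back to a larger window would not prove the statement, which fixes $m=7$.

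What is missing is to use the \emph{opposite} anchor $s_-$ as a genuinely oppositely labeled point, not just as a constraint on the direction of $w$. The certificate pair must also be allowed to differ from the hull-neighbours of the inferred point. The paper's argument runs as follows.
\begin{itemize}
\item Take four points of one label, say $+1$. Radon's theorem gives crossing diagonals $\{x_0,x_1\}$ and $\{x_2,x_3\}$.
\item Label them so that $x_0$ minimizes $w_h^\top(\cdot)$ among the four, and $x_2$ lies on the same side of line $x_0x_1$ as $s_-$.
\item Apply Radon again to $\{x_0,x_1,x_2,s_-\}$. Either some $x_i$ is inferred with no certificate, or $x_2\in\mathrm{conv}(x_0,x_1,s_-)$. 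The first outcome occurs because the segment from $s_-$ to $x_i$ meets a segment between two positive points, or in a degenerate collinear case.
\item In the second outcome, the three conditions $h'(x_1)=-1$, $h'(s_-)=-1$ and $h'(x_0)=+1$ force $w_{h'}^\top x_2<w_{h'}^\top x_0$. Since $w_h^\top x_2\ge w_h^\top x_0$, the comparison $\sigma(\cdot,\{x_0,x_2\})$ certifies $x_1$.
\end{itemize}
In the pentagon above with $w_h=(0,1)$, this can select the triple $\{v_2,v_3,v_4\}$ and infer $v_4$ from comparing $v_2$ with $v_3$. The hull-neighbours of $v_4$, by contrast, are $v_3$ and $s_+$.
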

The proof leverages the well known Radon's theorem (1.3.1 Theorem in \cite{Matousek2002}), that shows, as a special case, that for any set of $4$ points $\mathbb{R}^2$, there exists some partition of the points into two subsets whose convex hulls intersect.
\begin{proof}
    Throughout the proof, we will use the fact that the set of points with a particular label under a halfspace is convex.
    
    Our certificate will be $$\sigma(h,\{(x_a,y_a),(x_b,y_b)\})=\begin{cases}
        0 & w_h^\top x_a> w_h^\top x_b\\
        1 & w_h^\top x_a= w_h^\top x_b\\
        2 & w_h^\top x_a< w_h^\top x_b
    \end{cases}$$ where $w_h\in \mathbb{R}^2$ is defined such that for some $b\in \mathbb{R}$, $h(x)=\mathbf{1}[w_h^\top x\geq b],$ and $x_a,x_b$ are labeled lexicographically by their coordinates so that $x_b$ is lexicographically greater than or equal to $x_a$.

    In any $M\subseteq \Lcal$, $|M|=7$, there must be a subset of size $4$ in $M$ all sharing the same label. Without loss of generality, let this label be $+1$, and name the points $x_0,x_1,x_2,x_3\in \mathbb{R}^2$.

    To finish the proof we must show that the label of one of the points, $x_i$, is determined by $\sigma(h, \{x_j, x_k\})$ in $\Vcal(\{(x_l,+1)\sep l\neq i\})$ for distinct $i,j,k\in\{0,1,2,3\}$.

    If one of the points $x_i$ falls within the convex hull of the others, then for all $h'\in \Vcal(\{(x_j, +1)\sep j\neq i\})$, $h'(x_i)=+1$. Now we will consider the case where $x_0,x_1,x_2,x_3$ are in convex position. By Radon's theorem, without loss of generality, the line segments formed by $x_0,x_1$ and $x_2,x_3$ intersect. Still without loss of generality, assume that $w_h^\top x_0\leq w_h^\top x_i$, and that $x_2$ is on the same side of the (infinite) line defined by $x_0,x_1$ as $s_-$. 

    Applying Radon's theorem to  $x_0,x_1,x_2,s_-$, we obtain a partition into sets $S,U$. Without loss of generality, assume $|S|\geq |U|$. We will break the analysis up into cases depending on $U$, showing that each case is either impossible, or satisfies our desired claim.

    \begin{enumerate}
        \item $U=\{s_-\}$. This is impossible, since the convex hull of $\{x_0,x_1,x_2\}$ can contain only points labeled $+1$ by $h$.
        \item $|U|=2$. Without loss generality, define $i$ and $U$ such that $U=\{s_{-}, x_i\}$. In this case, the label of $x_i$ is determined in $\Vcal(\{(x_l,+1)\sep l\neq i\})$, because if $h'(x_i)=-1$, then the line segment defined by $U$ would be all labeled $-1$, but this intersects with the line segment defined by $S$, which must be labeled $+1$. 
        \item $U=\{x_i\}$ for some $i\in \{0,1\}$. This is also impossible, since $x_2,s_-$ are on the same side of the infinite line defined by $x_0,x_1$, the convex hull of $\{x_{1-i}, x_2,s_-\}$ can only intersect the line segment $x_0,x_1$ at $x_{1-i}$, so it can not contain $x_i$. The only exception to this is if $x_0,x_1,s_-$ are collinear, in which case the label of $x_{1-i}$ is determined in $\Vcal(\{(x_l,+1)\sep l\neq i\})$, since $x_i$ would be in the convex hull of $x_{1-i}$ and $s_{-}$.
        \item $U=\{x_2\}$. We will show that $x_1$ is determined by $\sigma(h ,\{(x_0,+1), (x_2,+1)\})$ in $$V=\Vcal(\{(x_0,+1),(x_2,+1),(x_3,+1)\}).$$ Consider any $h'\in V_{x_1\to -1}$. Since $U=\{x_2\}$, we have $x_2\in \mathrm{conv}(\{x_0,x_1,s_-\})$.
        Thus there exist $\alpha_0,\alpha_1,\alpha_s\ge 0$ with
        $\alpha_0+\alpha_1+\alpha_s=1$ and $\alpha_1+\alpha_s>0$ such that
        $x_2=\alpha_0 x_0+\alpha_1 x_1+\alpha_s s_-$.
        
        Because $h'\in V_{x_1\to -1}$, we have
        $w_{h'}^\top x_0\ge b_{h'}$, $w_{h'}^\top x_1<b_{h'}$, and $w_{h'}^\top s_-<b_{h'}$.
        Therefore,
        \begin{align*}
            w_{h'}^\top x_2
            &=\alpha_0 w_{h'}^\top x_0+\alpha_1 w_{h'}^\top x_1+\alpha_s w_{h'}^\top s_- \\
            &< \alpha_0 w_{h'}^\top x_0 + (\alpha_1+\alpha_s) b_{h'} \\
            &\le \alpha_0 w_{h'}^\top x_0 + (\alpha_1+\alpha_s) w_{h'}^\top x_0 \\
            &=(\alpha_0+\alpha_1+\alpha_s) w_{h'}^\top x_0
            = w_{h'}^\top x_0.
        \end{align*}
        So $\sigma(h',\{(x_0,+1), (x_2,+1)\})\neq \sigma(h ,\{(x_0,+1), (x_2,+1)\})$, regardless of the lexicographic ordering of $x_0, x_2$. This means that the label of $x_1$ is determined by the certificate $\sigma(h, \{(x_0,+1), (x_2,+1)\})$.
    \end{enumerate}
\end{proof}
\end{document}